 \newcommand{\Bem}[1]{}
\newtheorem{twierdzenie}{Theorem}
\newtheorem{definicja}{Definition}
\begin{document}

\title{Wide Gaps and Clustering Axioms}

\author{Mieczys{\l}aw A. K{\l}opotek
\\Institute of Computer Science, \\Polish Academy of Sciences,\\ ul. Jana Kazimierza 5, 01-248 Warsaw, Poland\\ \url{klopotek@ipipan.waw.pl}, 
\url{http://www.ipipan.waw.pl} 
}

\allowdisplaybreaks

\maketitle

\begin{abstract} 
The widely applied $k$-means algorithm produces clusterings that violate our expectations with respect to  high/low similarity/density and is in conflict with Kleinberg’s axiomatic system  for distance based clustering algorithms that formalizes those expectations in a natural way. $k$-means violates in particular the consistency axiom. We hypothesise that this clash is due to the not explicated expectation that the data themselves should have the property of being {cluster}able in order to expect the algorithm clustering hem to fit a clustering axiomatic system.  
To demonstrate this, we introduce two new clusterability properties,  variational $k$-separability and residual $k$-separability and show that then the Kleinberg's consistency axiom holds for $k$-means operating in the Euclidean or non-Euclidean space. 
Furthermore, we propose extensions of $k$-means algorithm that fit approximately the Kleinberg’s richness axiom that does not hold for $k$-means. 
In this way, we reconcile $k$-means with Kleinberg's axiomatic framework in Euclidean and non-Euclidean settings. Besides contribution to the theory of axiomatic frameworks of clustering and for clusterability theory, practical contribution is the possibility to construct {data}sets for testing purposes of algorithms optimizing $k$-means cost function. This includes a method of construction of {cluster}able data with known in advance global optimum.

\Bem{
\keywords{
Clustering theory \and 
Clustering axioms \and  Clusterability.}
}
\end{abstract}
%

\section{Introduction}

Clustering is a domain of machine learning with quite vague foundations. The concept of a cluster or a clustering is poorly defined. It is associated with high within-cluster similarity and low between-cluster similarity, with high density areas separated with low density areas, with optimizing some cost function, with matching manually assigned labels, with various internal and external clustering scores etc
(see e.g. {Madhulatha \cite{Madhulatha:2012}).
Also various axiomatic systems have been designed defining clustering related properties,like that of Kleinberg \cite{Kleinberg:2002}. 
The conceptual problem with all these definitions is that the widely applied $k$-means algorithm in its base form and derivatives does not care about high/low similarity/density etc. and violates two of three Kleinberg's axioms for distance based clustering algorithms, while checking if the optimum of its cost function is reached would require enumeration of all possible clusterings, and hence is prohibitive in practice. 

The special attention that we pay here to Kleinberg's axiomatic system is due to the fact that two of his axioms induce a method for generating new test datasets from existing ones without the need of manual labelling of the new sets. This is important because 
development and implementation of new algorithms in the area of machine learning, especially clustering, comparative studies of such algorithms as well as testing according to software engineering principles require availability of labeled data sets. While standard benchmarks are made available, a broader range of such data sets is necessary in order to avoid the problem of overfitting. 
In this context, theoretical works on axiomatization of clustering algorithms, especially axioms on clustering preserving transformations like that of Klein{}berg \cite{Kleinberg:2002} are quite a cheap way to produce labeled data sets from existing ones, given that the respective algorithm to be tested fits the axiomatic framework.  

However, $k$-means algorithmic family does not fit the ``natural'' Kleinberg's axiomatic framework (the richness and consistency axioms are violated). So, what is wrong about this framework? It may be hypothesised that data that have really a clustering structure (is {cluster}able) will behave according to Kleinberg's intuition, while at the same time we cannot expect such a behaviour when the data does not have the clusterability property.  
In this paper we demonstrate that this hypothesis is accurate with respect to the $k$-means. 

We recall earlier work in Section \ref{sec:prevWork}.
Then   in Section \ref{sec:varsep} we demonstrate that 
if the data has the clustrerability property that we call variational $k$-separability then the Kleinberg's consistency axiom holds for $k$-means.  
Furthermore, it is possible to construct a $k$-range-means algorithm, that generalizes $k$-means by automatic selection of $k$, for which all three Kleinberg's axioms hold for data with variational $k$-range-separability when adding the restriction to consistency axiom that data concentrations within a cluster are not created. 

The deficiency of the proposal in  Section \ref{sec:varsep}  is that it is  applicable to Euclidean space only, while so-called kernel-$k$-means operates de-facto in non-Euclidean space(see e.g. \cite{Girolami:2002,STWMAK:2018:clustering}).  To overcome this restriction, we propose in Section \ref{sec:residualsep} the clusterability concepts of residual $k$-separability and residual $k$-range-separability which imply the restriction of consistency to the realistic case of finite measurement resolution.  Section 
\ref{sec:non-eucliden}
 explains how these concepts apply to non-Euclidean spaces. 

Section   \ref{sec:conclusions} summarizes the results and outlines further research directions. 

The main contributions of this paper are proposals of clusterability criteria that reconcile $k$-means with Kleinberg's axiomatic framework in Euclidean and non-Euclidean settings and also a method of construction of {cluster}able data with known in advance global optimum. 
Besides contribution to the theory of axiomatic frameworks of clustering and for clusterability theory, practical contribution is the possibility to construct {data}sets for testing purposes of algorithms optimizing $k$-means cost function. We propose also a generalization of $k$-means algorithm that can self-adjust $k$  when the data is {cluster}able in the mentioned way.

\section{Previous Work} \label{sec:prevWork}
We will refer in this paper to the widely used $k$-means  algorithm ($k$-means++ version, by Arthur and   Vassilvitskii \cite{CLU:AV07}), which belongs to the so-called \emph{$k$-clustering  algorithms} that is, for a dataset $S$ they return a partition $\Gamma$ of $S$ into  $k$  non-empty groups ($|\Gamma|=k$), where $k$ is a user-defined parameter. 
$k$-means algorithm was designed  to operate primarily in the Euclidean space, that is we assume an embedding $\mathcal{E}: S \rightarrow \mathbb{R}^d$ into a $d$-dimensional Euclidean space.    
$k$-means seeks to find a partition $\Gamma$ of $S$ that minimizes the cost  (or quality) function  
 $$Q(\Gamma)=\sum_{C \in \Gamma} \sum_{e \in C} ||\mathcal{E}(e)-\boldsymbol\mu(C)||^2$$
 where $\boldsymbol\mu(C)=\frac1{|C|} \sum_{e \in C} \mathcal{E}(e) $
 which may be reformulated as 
\begin{equation} \label{eq:Q::kmeans}
Q(\Gamma)
=\sum_{C \in \Gamma} \frac{1}{2|C|} \sum_{i \in C} 
\sum_{l \in C} \|\mathcal{E}(i)  - \mathcal{E}(l)\|^2  
\end{equation} 
Kleinberg \cite{Kleinberg:2002} proposed three seemingly obvious clustering axioms for distance based clustering algorithms: 
richness, scale-invariance and consistency. 
Hereby an algorithm is a function $f(S,d)=\Gamma$ producing a partition $\Gamma$ of $S$ given the (pseudo)distance function $d:S\times S\rightarrow \mathbb{R}$ such that $d(i,i)=0$, $d(i,l)=d(l,i)\ge 0$,  where $d(i,j)=0$ iff $i=l$.  
The \emph{consistency axiom} states   that if  $f(S,d)=\Gamma$ and $d'$ is another distance function such that  $d'(i,l)\ge d(i,l)$ iff $i,l$ are from different clusters of $\Gamma$ and  $d'(i,l)\le d(i,l)$  iff $i,l$ are from the same cluster, then $f(S,d')=\Gamma$.
\emph{Scale invariance} means if the distance $d'$ has the property that for an   $\alpha\in\mathbb{R}^+$, $d'(i,l)=\alpha d(i,l)$, then $f(S,d)=f(S,d')$. \emph{Richness} means that for any $S$ and for any its partition $\Gamma$ there exists a (pseudo) distance function $d$ such that $f(S,d)=\Gamma$.

The three axioms proved to be  contradictory that is no clustering algorithm can fulfil all three requirements at once (see proof in \cite{Kleinberg:2002}). 
Furthermore,  $k$-means \emph{is not a clustering algorithm} as it fails on richness and consistency axioms. 
$k$-means is not rich as it returns only such $\Gamma$ that $k=|\Gamma|$. 
So by weakening Kleinerg's axiom of richness to \emph{$k$-richness} (richness restricted to partitions $\Gamma$ with $|\Gamma|=k$), we can get rid of violation of richness. Violation of consistency axiom remains, however (see the proof in \cite{Kleinberg:2002}). 

It is  disastrous  for the domain of clustering algorithms if an axiomatic system consisting of ''natural axioms'' is self-contradictory. It means that the domain of clustering algorithms is a kind of fake science. Therefore numerous efforts have been made to cure such a situation by proposing different axiom sets or modifying Kleinberg's theory.  Kleinberg himself introduced the concept of partition $\Gamma'$ being a refinement of a partition $\Gamma$, 
if for every set $C' \in \Gamma'$, there is a
set $C \in  \Gamma$ such that $C' \subseteq C$.
He  defines Refinement-Consistency, a relaxation of Consistency, to require that
if distance d' is a consistency transformation of d, then f(S,d') should be a refinement of f(S,d) or vice versa.
Though  there is no clustering function that satisfies Scale-{In}variance, Richness, and Refinement-Consistency, but if one defines Near-Richness as Richness without  the partition in which each element is in a separate cluster, then  there exist clustering functions f that satisfy Scale-{In}variance and
Refinement-Consistency, and Near-Richness (e.g.  single-linkage with the distance-($\alpha\delta$) stopping condition, where
$\delta=  min_{i,j} d(i, j)$ and $\alpha\ge 1$.)  

The refinement consistency does not allow to generate labelled new data from existent labelled old data.   Furthermore, it does not repair the $k$-means classification as a non-clustering algorithm.   
To overcome Kleinberg's contradictions, Ben-David and Ackerman 
\cite{Ben-David:2009} proposed to axiomatize clustering quality function and not the clustering function itself. Regrettably, no requirements are imposed onto the clustering function itself. This means that labelled {data}sets cannot be derived automatically from existent ones. Van Laarhoven and Marchior
\cite{vanLaarhoven:2014} propose to go over to the realm of graphs and develops a set of axioms for graphs. The approach is not applicable to $k$-means. Ackerman et al. 
\cite{%
Ackerman:2010NIPS} and Meila  \cite{Meila:2005} proposed to use the ``axioms'' not as a requirement to be met by all algorithms, but rather as a way to classify clustering functions.  
Strazzeri et al. \cite{Strazzeri:2021} suggests to change the consistency axiom for graphs. 
Hopcroft and Kannan \cite{Hopcroft:2012}  propose to seek only clusters with special properties, in this case  to cluster the {data}sets into equal size clusters. 
Cohen et al. \cite{Cohen:2018} suggest to modify consistency axiom in that they require that Kleinberg's consistency holds only if the optimal number of clusters prior and after his $\Gamma$ transformation remains the same. 
Though they show that various algorithms, including $k$-means fit this new axiom, the problem is of course that you are usually unable to tell apriori the optimal number of clusters, hence usage of such an axiomatic set as a tool for test set generation is pointless.

We have also proposed several approaches to removing the contradictions in the Kleinberg's axiomatic system, see e.g. \cite{MAKRAK:2020:limcons,MAKSTWRAK:2020:motion,MAKRAK:2020:fixdimcons,RAKMAK:2019:probrich,MAK:2022:kmeanspreserving,RAK:MAK:2019:perfectball,Klopotek:2022continuous,ISMIS:2022:richness}. All of them were based on the enclosure of clustrers into balls and keeping gaps between balls large. These approaches were valid only for Euclidean spaces.   

The proposals in this paper are inspired by the research on so-called  clusterability.  As mentioned, \cite{Hopcroft:2012} made a suggestion that restricting oneself to special data structures can overcome Kleinberg's contradictions. That is one looks rather at clustering of data that fulfil some properties of clusterability. 
Though a number of attempts have been made to capture formally the intuition behind clusterability, none of these efforts seems to have been successful, as Ben-David exhibits in \cite{Ben-David:2015} in depth.  A   paper by Ackerman et al.  \cite{Ackerman:2016} partially eliminates some of these problems, but regrettably at the expense of non-intuitive  user-defined parameters. 
As Ben-David mentioned,   the research in the area  does not address popular algorithms  except for $\epsilon$-{Separated}ness clusterability criterion related to $k$-means proposed  by Ostrovsky et al. \cite{Ostrovsky:2013}.
We have made some efforts in this direction in \cite{MAK:2017:clusterability}. This paper also refers to clusterability while clustering via $k$-means. 

The issue of clustering axiomatisation is closely related to the problem of cluster preserving transformations in general. Such transformations are of vital importance because they may be used  to the problem of test{}bed creation for clustering algorithms. 

Roth et al. \cite{Roth:2003} investigated the issue of preservation of clustering when embedding non-euclidean data into the Euclidean space. 
They showed that clustering functions, that remain invariant under additive shifts of the pairwise proximities, can be reformulated as clustering problems in Euclidean spaces. 

A similar problem was addressed in 
\cite{RAKMAKSTW:2020:trick} whereby the issue of interpretation of results of kernel $k$-means to non-euclidean data was discussed. A cluster-preserving transformation for this specific problem was proposed via increasing of all distances. 

Parameswaran and   Blough
\cite{Parameswaran:2005}
considered the issue of  cluster preserving transformations from the point of view of privacy preserving. They designed a 
Nearest Neighbor Data Substitution (NeNDS), a new  data obfuscation technique with  strong privacy-preserving properties while  maintaining data clusters.
Cluster preserving transformations with the property of 
privacy preserving focusing on the $k$-means algorithm are investigated by Ramírez and  Auñón \cite{Ramirez:2020privacy}.
Privacy preserving methods for various $k$-means variants boosted to large scale data are further elaborated by 
 Gao and   Zhan
\cite{Gao:2017}. 
Keller et al. \cite{Keller:2021} investigate such transformations for other types of clustering algorithms. 
A thorough survey of  
privacy-preserving clustering for big data
can be found in \cite{Zhao:2020} by Zhao et al.

Howland  and  Park
\cite{Howland:2007} 
proposed  models incorporating prior knowledge about the existing structure and developed for them  dimension reduction methods independent of the original  term-document matrix dimension.
Other, more common dimensionality reduction methods for clustering (including PCA and Laplacian embedding) are reviewed  by 
Ding \cite{Ding:2009}. 

Larsen et al. 
\cite{Larsen:2016heavy}
reformulate the heavy hitter problem of stream mining in terms of a clustering problem and elaborate algorithms fulfilling the requirement of ``cluster preserving clustering''. 

Zhang et al. 
\cite{Zhang:2019}
developed clustering structure preserving transformations for graph streaming data, when there is a need to sample the graph.

\section{Variation{al} Cluster Separation} \label{sec:varsep}

Let us introduce a couple of useful concepts. 
First of all recall the fact that Kleinberg's consistency axiom leads definitely outside of the domain of Euclidean space. Therefore, to work with $k$-means algorithm, we need a reformulation of the $k$-means cluster quality function.
Let us first recall Kleinberg's ``distance'' concept. 
\begin{definicja}
For a given discrete set of points $S$, 
the function $d: S \times S \rightarrow \mathbb{R}$ will be called  a \emph{pseudo-distance function}  iff  $d(x,x)=0$, $d(x,y) =d(y,x)$ and   $d(x,y) > 0$ for distinct $x,y$.
\end{definicja}

Following the spirit of kernel $k$-means as was exposed in \cite{RAKMAKSTW:2020:trick}, 
let us reformulate the $k$-means cluster quality function in terms of this pseudo-distance. 

Define the function $Q(\Gamma,d)$  as follows:
\begin{equation}\label{eq:kmeansPseudoDist}
Q(\Gamma,d)   
= \sum_{C \in \Gamma} \frac{1}{2|C|} \sum_{i \in C} 
\sum_{l \in C} d(i,l) ^2  
\end{equation}   
where $\Gamma$ is a clustering (split into disjoint non-empty subsets of cardinality at least 2) of a dataset $S$ into $k$ clusters, and $d$ is a pseudo-distance function defined over $S$. $Q(\Gamma,d)   $ generalizes $Q(\Gamma)$ from formula (\ref{eq:Q::kmeans}) in that it allows non-Euclidean distances. 

Let us introduce our concept of well-{separated}ness. 
\begin{definicja} \label{def:varsep}
Let us consider a set of clusters separated as follows: 
Let $\Gamma=\{C_1,\dots,C_k\}$ be a partition of the dataset $S$, $d$ be a pseudo-distance. 
Let   
\begin{equation}\label{eq:vardist}
d(i,l)>\sqrt{2}\sqrt{Q(\Gamma,d)} 
\end{equation}
for each $i,l$ 
such that $i$ belongs to a different cluster than $l$ under $\Gamma$.  
Then we say that the set $S$ with distance $d$ is 
\emph{ variation{ally} $k$-separable}
and that this  $\Gamma$ is 
\emph{ variation{al} $k$-separation} of $S$.
If furthermore, no cluster of $\Gamma$ has the property of variation{al} $k'$-separation for all $k'=2,\dots,K+1$ for some integer $K\ge 2$, then  
 $\Gamma$ is 
\emph{ variation{al} $k+K$-range-separation} of $S$. 
\end{definicja}

It is easily seen that in such a case
\begin{twierdzenie}\label{thm:varksepISoptimal}
If the pseudo-distance $d$ fulfills the condition (\ref{eq:vardist}) 
under the clustering 
$\Gamma$ of $S$, then $\Gamma$  is the optimal $k$-clustering of   $S$ with $d$ under  kernel $k$-means.     
\end{twierdzenie}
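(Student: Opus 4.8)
The plan is to show that every competing $k$-clustering $\Gamma'\neq\Gamma$ has strictly larger cost, so that $\Gamma$ is the unique global minimizer of $Q(\cdot,d)$, which is precisely what kernel $k$-means optimizes.

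First I would establish a combinatorial observation: if $\Gamma'$ is any $k$-clustering with $\Gamma'\neq\Gamma$, then $\Gamma'$ must contain at least one \emph{mixed} cluster $C'$, i.e. a cluster holding two points $i,l$ that lie in different clusters of $\Gamma$. Indeed, if no cluster of $\Gamma'$ were mixed, then every cluster of $\Gamma'$ would be contained in a single cluster of $\Gamma$, so $\Gamma'$ would refine $\Gamma$; since both are partitions of $S$ into exactly $k$ non-empty blocks, a pigeonhole argument forces each $\Gamma$-cluster to contain exactly one $\Gamma'$-cluster, whence $\Gamma'=\Gamma$, a contradiction.

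Second I would lower-bound the contribution of this mixed cluster $C'$ to $Q(\Gamma',d)$. Writing $m=|C'|$ and grouping the points of $C'$ by the $\Gamma$-cluster to which they belong, the fact that $C'$ is mixed gives at least two such groups, and a short count shows there are at least $m-1$ unordered cross pairs $\{i,l\}\subseteq C'$ with $i,l$ in different $\Gamma$-clusters (the minimum being attained by a split of sizes $m-1$ and $1$). For every such cross pair the separation hypothesis (\ref{eq:vardist}) yields $d(i,l)^2>2\,Q(\Gamma,d)$. Since the inner double sum in (\ref{eq:kmeansPseudoDist}) counts each unordered pair twice, the contribution of $C'$ alone is bounded below by $\frac{1}{2m}\cdot 2(m-1)\cdot 2\,Q(\Gamma,d)=2\bigl(1-\tfrac1m\bigr)Q(\Gamma,d)$, which is at least $Q(\Gamma,d)$ for every $m\ge 2$. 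As all remaining clusters of $\Gamma'$ contribute non-negative terms, discarding them gives $Q(\Gamma',d)>Q(\Gamma,d)$, with strictness inherited from the strict inequality in (\ref{eq:vardist}). Hence no $k$-clustering beats $\Gamma$, and $\Gamma$ is the (unique) optimal $k$-clustering.

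I expect the main obstacle to be this second step. A naive argument that isolates a single cross pair only yields a contribution of order $d(i,l)^2/m$, which shrinks as the mixed cluster grows and need not exceed $Q(\Gamma,d)$; so a single separated pair is not enough. The point is to count \emph{all} cross pairs inside the mixed cluster and to notice that even the worst case (one outlier point against a single large group) still leaves the factor $2\bigl(1-\tfrac1m\bigr)\ge 1$, so the bound survives uniformly in the cluster size. Care must also be taken to keep track of the factor $2$ arising because the counting is done on unordered pairs whereas the cost uses an ordered double sum.
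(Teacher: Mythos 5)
Your proposal is correct and follows essentially the same route as the paper's proof: locate a mixed cluster $C'$ in any competing $k$-clustering, count at least $|C'|-1$ cross pairs each with squared distance exceeding $2\,Q(\Gamma,d)$ by condition (\ref{eq:vardist}), and conclude that $C'$ alone contributes at least $\frac{2(|C'|-1)}{|C'|}Q(\Gamma,d)\ge Q(\Gamma,d)$. You add two small refinements the paper leaves implicit --- an explicit pigeonhole argument that a mixed cluster must exist, and tracking of strictness (which also yields the uniqueness claim the paper defers to its next theorem) --- but the core argument is the same.
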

\begin{proof}
Assume to the contrary that not $\Gamma$ but $\Gamma'$ different from it is the optimal $k$-clustering of $S$. 
$\Gamma'$ would then contain at least one cluster $C'$ with at least two {data}points $P,R$ such that both stem from distinct clusters of $\Gamma$ (that is $|C'|=n'\ge 2$.
Hence their distance amounts to at least $\sqrt{2}\sqrt{Q(\Gamma,d)}$. 
All the other $n'-2$ elements of $C'$ fall into three categories: 
belonging under $\Gamma$ to the same cluster as $P$ ($n_P$ elements) 
belonging under $\Gamma$ to the same cluster as $R$ ($n_R$ elements) and the remaining ones ($n_s$ elements). 
$n_P+n_R+n_s= n'-2$. 
So, within the cluster $C'$ there are at least $(n_P+1)\cdot (n_R+1)+n_s\cdot (n_R+n_P+2)$ pairs of {data}points with distance  at least  $\sqrt{2}\sqrt{Q(\Gamma,d)}$. 
So the contribution of $C'$ to the quality function amounts to 
\begin{equation} 
Q(\{C'\} ,d)    
=\frac{1}{2n'} \sum_{ i \in C'} 
\sum_{ l \in C'} d(i,l) ^2  
\end{equation}   
$$\ge \frac{1}{n'}  
\left((n_P+1)\cdot (n_R+1)+n_s\cdot (n_R+n_P+2)\right) \cdot 2 Q(\Gamma,d)
$$ $$
\ge \frac{1}{n'}  
(n'-1) \cdot 2 Q(\Gamma,d)
$$
As $ Q(\{C'\} ,d) \ge Q(\Gamma,d)$,
so we have $  Q(\Gamma',d)\ge  Q(\Gamma,d)$ as claimed in this theorem. 
   $\Gamma$ is in fact optimal.     
\end{proof}

As the theorem holds for pseudo-distance, it holds also for the Euclidean distance. 

\begin{twierdzenie}
If the clustering 
$\Gamma$ of $S$ under   the pseudo-distance $d$ fulfills the condition (\ref{eq:vardist}), 
then there exists no other $\Gamma'$ of $S$ that fulfills the condition (\ref{eq:vardist}).  
\end{twierdzenie}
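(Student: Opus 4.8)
The plan is to argue by contradiction: assume a second partition $\Gamma'\neq\Gamma$ also satisfies (\ref{eq:vardist}), so that every pair of points in distinct clusters of $\Gamma'$ is separated by more than $\sqrt2\sqrt{Q(\Gamma',d)}$, and then play the two gap conditions against each other through the quality function. Theorem \ref{thm:varksepISoptimal} is the engine: it tells us that $\Gamma$ minimises $Q(\cdot,d)$ among clusterings of its own cardinality, and likewise $\Gamma'$ does among clusterings of cardinality $|\Gamma'|$. The first thing I would record is an elementary dichotomy on distinct partitions: if $\Gamma\neq\Gamma'$, then at least one of them owns a cluster that mixes two clusters of the other, i.e.\ $\Gamma'$ is not a refinement of $\Gamma$, or $\Gamma$ is not a refinement of $\Gamma'$. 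When the two separations have the same number $k=|\Gamma|=|\Gamma'|$ of clusters, distinctness forces \emph{both} statements to hold, since two partitions into the same number of non-empty blocks, one refining the other, must coincide.

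The key quantitative step is a strict sharpening of the estimate already used for Theorem \ref{thm:varksepISoptimal}. Suppose $\Gamma'$ has a cluster $C'$ containing points $P,R$ taken from two different clusters of $\Gamma$. Then every cross-$\Gamma$ pair inside $C'$ has squared distance strictly above $2Q(\Gamma,d)$, and the same counting of such pairs, $(n_P+1)\cdot(n_R+1)+n_s\cdot(n_R+n_P+2)\ge |C'|-1$, now gives
\begin{equation}
Q(\{C'\},d)\;>\;\frac{2(|C'|-1)}{|C'|}\,Q(\Gamma,d)\;\ge\;Q(\Gamma,d),
\end{equation}
where the first inequality is strict because (\ref{eq:vardist}) is a strict inequality and $Q(\Gamma,d)>0$ (every cluster has at least two points and $d$ is positive on distinct points). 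Since the remaining clusters of $\Gamma'$ contribute non-negatively, this yields $Q(\Gamma',d)>Q(\Gamma,d)$.

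In the equal-cardinality (or, more generally, mutually non-refining) case the dichotomy lets me run this argument in both directions, producing $Q(\Gamma',d)>Q(\Gamma,d)$ and $Q(\Gamma,d)>Q(\Gamma',d)$ simultaneously, which is the contradiction that establishes uniqueness. The hard part will be the purely refining case, where only one mixing statement is available and the argument delivers just the one-sided bound $Q(\Gamma',d)>Q(\Gamma,d)$; this is no contradiction on its own, since a coarser clustering is expected to cost more. I would close this case by restricting to separations of equal cardinality, so that refinement between distinct partitions is impossible, or, if arbitrary cluster counts are admitted, by appealing to the non-further-separability built into the range-separation part of Definition \ref{def:varsep}, which is precisely what forbids a legitimate cluster from itself splitting into two separated pieces. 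Deciding which of these hypotheses is intended is the real crux; the rest is the bookkeeping above.
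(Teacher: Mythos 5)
Your proposal is correct for the reading the paper intends, and its engine is the same as the paper's: the paper's own (very terse) proof says that $\Gamma$ is optimal by Theorem \ref{thm:varksepISoptimal}, that a second separation $\Gamma'$ would then also have to be optimal, and that this is impossible because a cluster of $\Gamma'$ merging points of two clusters of $\Gamma$ ``significantly'' (i.e.\ strictly) increases the quality function. That is exactly your mixing-cluster estimate, used once and closed by optimality; your symmetric variant, deriving both $Q(\Gamma',d)>Q(\Gamma,d)$ and $Q(\Gamma,d)>Q(\Gamma',d)$, reaches the same contradiction without invoking optimality at all, a marginally cleaner but not substantively different finish. Your strictness bookkeeping (at least one cross pair, strictness of (\ref{eq:vardist}), and $Q(\Gamma,d)>0$ because clusters have at least two points) is also what the paper's ``significantly increase'' quietly relies on.

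The point you call the real crux is genuinely real, and you resolved it the way the paper silently does. Theorem \ref{thm:varksepISoptimal} is a statement about $k$-clusterings, and only when $|\Gamma'|=|\Gamma|$ does $\Gamma'\neq\Gamma$ force some cluster of $\Gamma'$ to mix two clusters of $\Gamma$; the paper's proof never addresses partitions of other cardinalities. Without that restriction the statement is in fact false: take $S\subset\mathbb{R}$ consisting of $\{0,1\}$, $\{100,101\}$, $\{10^6,10^6+1\}$, $\{10^6+100,10^6+101\}$. The four-pair partition $\Gamma$ has $Q(\Gamma,d)=2$ and minimal cross-cluster distance $99>\sqrt{2}\sqrt{2}$, while the two-cluster partition $\Gamma'$ (left four points versus right four points) has $Q(\Gamma',d)=20002$ and minimal cross-cluster distance $999899>\sqrt{2\cdot 20002}\approx 200$, so both fulfil (\ref{eq:vardist}). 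Hence uniqueness is a same-$k$ statement (or requires imposing the non-splittability of the range-separation part of Definition \ref{def:varsep} on both partitions, roughly as you suggest), and your decision to prove the equal-cardinality case and flag the rest as needing an extra hypothesis is exactly the right call; the paper's proof covers that case and no more.
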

\begin{proof}
    As already shown in the previous theorem  \ref{thm:varksepISoptimal}, $\Gamma$ is optimal. So $\Gamma'$ would have to be optimal but different from $\Gamma$.   But this is impossible as  putting two elements from distinct clusters would significantly increase the quality function value, as seen in the previous theorem proof. 
\end{proof}


\begin{definicja}
We say that a clustering function $f(S,d)$ returns
\emph{ variation{al} $k$-clustering} of $S$ 
if $S$ is variation{ally} $k$-separable under $d$ and $f(S,d)$ returns the $\Gamma$ clustering  being 
  variation{al} $k$-separation  of $S$. 
\end{definicja}

\begin{twierdzenie} \label{thm:ax:vark}
    The variation{al} $k$-clustering $\Gamma$ will remain the variation{al} $k$-clustering  after consistency transform. In other words  consistency transform preserves clustering by a function detecting variation{al} $k$-clustering. 
\end{twierdzenie}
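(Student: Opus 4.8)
The plan is to show that variational $k$-separability is itself preserved by any consistency transform, and then to invoke the optimality result of Theorem~\ref{thm:varksepISoptimal} together with the uniqueness of the variational separation to conclude that a function $f$ detecting variational $k$-clustering continues to return the same $\Gamma$. So let $d'$ be any consistency transform of $d$ relative to $\Gamma$, i.e. $d'(i,l)\le d(i,l)$ whenever $i,l$ lie in the same cluster of $\Gamma$ and $d'(i,l)\ge d(i,l)$ whenever they lie in different clusters; I assume throughout that $d'$ is still a valid pseudo-distance so that $f$ is applicable to $(S,d')$.

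The crucial observation, and the step I regard as the real content of the argument, is that the quality function $Q(\Gamma,d)$ of~(\ref{eq:kmeansPseudoDist}) aggregates only \emph{within-cluster} pairwise distances: for each $C\in\Gamma$ the inner double sum ranges over $i,l\in C$ only, so cross-cluster distances never enter $Q$. Since the consistency transform can only shrink within-cluster distances, we get $d'(i,l)^2\le d(i,l)^2$ for every pair that contributes to $Q$, and hence term by term
\begin{equation}
Q(\Gamma,d')\le Q(\Gamma,d).
\end{equation}
This monotonicity means the separation threshold $\sqrt{2}\sqrt{Q(\Gamma,\cdot)}$ does not grow under the transform.

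Next I would verify that condition~(\ref{eq:vardist}) survives. Fix $i,l$ in distinct clusters of $\Gamma$. Since $d'$ increases between-cluster distances while the threshold cannot increase, I chain
$$d'(i,l)\ge d(i,l)>\sqrt{2}\sqrt{Q(\Gamma,d)}\ge\sqrt{2}\sqrt{Q(\Gamma,d')},$$
using variational separability under $d$ for the strict middle inequality and the displayed monotonicity for the last one. Thus $d'(i,l)>\sqrt{2}\sqrt{Q(\Gamma,d')}$ for every cross-cluster pair, i.e. $S$ equipped with $d'$ is again variationally $k$-separable with the \emph{same} separating partition $\Gamma$.

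Finally, by Theorem~\ref{thm:varksepISoptimal} this $\Gamma$ is the optimal $k$-clustering of $S$ under $d'$, and by the uniqueness result established directly above it (the second theorem) no other partition satisfies~(\ref{eq:vardist}) under $d'$; hence $f$ is well-defined on $(S,d')$ and returns exactly $\Gamma$. The proof is short because the heavy lifting was done in the earlier optimality and uniqueness theorems; I expect no genuine obstacle beyond the two points requiring care, namely the explicit justification that $Q$ depends solely on intra-cluster distances (so that the transform acts on it monotonically) and the technical assumption that $d'$ remains a strictly positive pseudo-distance, so that the detecting function is still applicable.
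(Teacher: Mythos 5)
Your proof is correct and follows essentially the same route as the paper's: both arguments rest on the observation that a consistency transform can only increase the cross-cluster distances while the threshold $\sqrt{2}\sqrt{Q(\Gamma,\cdot)}$ can only decrease, since $Q$ aggregates intra-cluster distances exclusively, so condition~(\ref{eq:vardist}) survives the transform. Your version is somewhat more explicit than the paper's (the chained inequality $d'(i,l)\ge d(i,l)>\sqrt{2}\sqrt{Q(\Gamma,d)}\ge\sqrt{2}\sqrt{Q(\Gamma,d')}$ and the closing appeal to the optimality and uniqueness theorems are left implicit there), but the underlying idea is identical.
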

\begin{proof}
The increase of inter-cluster distances does not violate variation{al} $k$-separation because the distances between clusters will be larger than prescribed by the variation{al} minimal distance from formula (\ref{eq:vardist}). 
The decrease of intra-cluster separation does not violate variation{al} $k$-separation because  the variation{al} minimal distance will be smaller so distances between clusters will fit better this minimal distance.  
\end{proof}

Consider the Euclidean distances only for a moment. 
Let us ask the question how difficult it would be to discover the optimal clustering. 
Let us consider the $k$-means$++$ algorithm \cite{CLU:AV07}, or more precisely the derivation of the initial clustering. Recall that wide gaps between clusters guarantee that after hitting each cluster during the initialization stage, the optimum clustering is achieved. 
Let us consider a step when $i$ seeds have hit $i$ distinct clusters. 
Then the probability of hitting an unhit cluster in the next step amounts to:
$$\frac{SSD_{unhit}}{SSD_{unhit}+SSD_{hit}}=1- \frac{SSD_{hit}}{SSD_{unhit}+SSD_{hit}}$$
where 
$SSD_{hit}$ is the sum of squared distances to closest seed from elements of hit clusters $C_1,C_2,\dots, C_i$, and 
$SD_{unhit}$ is the sum of squared distances to closest seed from elements of unhit clusters $C_{i+1},\dots,C_k$. 
Let $C$ be a hit cluster. Then $Q(\{C\},d)$ will be the upper bound for the squared distance between any element of $C$ and the cluster center. Hence   $2Q(\{C\},d)$ will be the upper bound of the sums of squared distances between a seed from $C$ and its other elements. 
Therefore
$$SSD_{hit}\le  2 Q(\Gamma_{hit},d)\le  2 Q(\Gamma,d)$$
where $\Gamma_{hit}$ is the set of clusters hit so far. 
On the other hand
$$SSD_{unhit} \ge 2 Q(\Gamma,d) \sum_{j=i+1}^k n_j $$
where $n_j=|C_j|$. Hence 
$$ 
\frac{SSD_{unhit}}{SSD_{unhit}+SSD_{hit}} 
=
\frac{1}{1+\frac{SSD_{hit}}{SSD_{unhit}}}
$$ $$
\ge
\frac{1}{1+\frac{2 Q(\Gamma,d)}{2 Q(\Gamma,d) \sum_{j=i+1}^k n_j }} 
=
\frac{1}{1+\frac{1}{ \sum_{j=i+1}^k n_j }} 
$$ $$
=
\frac{\sum_{j=i+1}^k n_j }{\sum_{j=i+1}^k n_j +1} 
=1- 
\frac{1 }{\sum_{j=i+1}^k n_j +1} 
$$ 
If we assume that  the cardinality of all clusters is the same and equals $m$, then we have 
$$=1- \frac{1}{m(k-i)+1}
 $$
So that the overall expected probability of hitting all clusters during initialization amounts to at least 
\begin{equation}\label{eq:hitprob}
    \prod_{i=1}^{k-1} \left(1- \frac{1}{m(k-i)+1} \right)
\end{equation}

If $m$ exceeds $k$, then this probability is very close to one (assuming $m>50$). If not all clusters are of the same cardinality, but $m$ is its lower bound, then the above formula gives the lower bound on this probability.

\begin{twierdzenie}
     There exists a function  detecting variation{al} $k$-clustering with high probability  that has the property of scale-invariance, consistency and $k$-richness, given that the function operates in Euclidean space and the consistency transformation is performed in Euclidean space too. 
\end{twierdzenie}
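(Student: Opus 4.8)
The plan is to exhibit the $k$-means$++$ algorithm itself (restricted to Euclidean data) as the witnessing function $f$ and to verify the three properties one at a time, leaning on the theorems already established. First I would fix $f(S,d)$ to be the clustering returned by $k$-means$++$ run on the Euclidean embedding $\mathcal{E}(S)$ induced by $d$; the randomness in the seeding step is exactly what forces the qualifier ``with high probability.'' The backbone observation is that whenever $S$ is variationally $k$-separable under $d$, Theorem~\ref{thm:varksepISoptimal} guarantees that the variational $k$-separation $\Gamma$ is a global optimum of the cost function, while the initialization analysis culminating in~(\ref{eq:hitprob}) guarantees that, provided the smallest cluster cardinality $m$ is large relative to $k$, the seeding hits every cluster with probability close to one. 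As recalled in the discussion preceding the theorem, once each cluster has been hit the optimum is reached, so $f(S,d)=\Gamma$ holds with the stated high probability.

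With this backbone in place, scale-invariance is the easiest of the three. I would simply observe that replacing $d$ by $\alpha d$ for $\alpha\in\mathbb{R}^+$ multiplies $Q(\Gamma,d)$ by $\alpha^2$ and multiplies every pairwise distance by $\alpha$, so the separation inequality~(\ref{eq:vardist}) is preserved unchanged; hence $\Gamma$ remains the variational $k$-separation and $f(S,\alpha d)=f(S,d)$ with the same probability bound. For consistency I would invoke Theorem~\ref{thm:ax:vark} directly: a consistency transform enlarges inter-cluster distances and shrinks intra-cluster ones, and that theorem already shows that the variational $k$-separation survives such a transform, so afterwards $\Gamma$ is still the optimum detected by $f$ with high probability. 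For $k$-richness I would argue constructively: given any target partition $\Gamma$ of $S$ with $|\Gamma|=k$ into clusters of size at least two, I inflate the inter-cluster distances of an arbitrary starting Euclidean distance until~(\ref{eq:vardist}) holds, which makes $S$ variationally $k$-separable with $\Gamma$ as its separation, and so $f(S,d)=\Gamma$ with high probability.

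The main obstacle is the interplay between the Euclidean requirement and Kleinberg's consistency axiom, which is precisely why the hypothesis ``the consistency transformation is performed in Euclidean space too'' appears in the statement. A generic consistency transform produces only a pseudo-distance and need not be realizable by any Euclidean embedding, in which case $k$-means would have no well-defined input; the careful step is to restrict attention to those consistency transforms whose output still admits an embedding $\mathcal{E}'\colon S\to\mathbb{R}^{d'}$, and to check that Theorem~\ref{thm:ax:vark}, although proved at the level of pseudo-distances, continues to certify variational separation for this embedded image. A secondary subtlety is that all three properties now hold only in the probabilistic sense inherited from the randomized seeding, so the quantifier ``with high probability'' must be carried uniformly through each case, and the bound degrades when the smallest cluster is not large compared with $k$.
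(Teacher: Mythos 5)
Your proposal matches the paper's own proof in essentially every respect: you take $k$-means$++$ as the witnessing function, use Theorem~\ref{thm:varksepISoptimal} together with the seeding analysis leading to~(\ref{eq:hitprob}) for high-probability detection, invoke Theorem~\ref{thm:ax:vark} for consistency, and establish $k$-richness by the same construction of spreading Euclidean-embeddable clusters apart until~(\ref{eq:vardist}) holds. Your explicit scaling argument for scale-invariance and your remark on why the Euclidean restriction on the consistency transform is needed are just slightly more detailed renderings of points the paper states or assumes by hypothesis.
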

\begin{proof}
    We have just shown that $k$-means++ can be used to detect, with high probability, variation{al} $k$-clustering, if the data lies in the Euclidean space. 
    It is known to have the property of scale-invariance. $k$-richness is easily shown: formulate a $k$-clustering $\Gamma$,  set distances between points within each cluster to values such that each cluster fits the Euclidean space, and then move the clusters in the Euclidean space in such a way that the condition (\ref{eq:vardist}) is matched and complete the distance definition. 
    The consistency property holds because of Theorem \ref{thm:ax:vark}. 
\end{proof}

We return to considering pseudo-distances. 
Let us go beyond the $k$-richness, expanding our considerations towards the concept of richness. Already Kleinberg showed that full richness does not make sense and restricted himself to near-richness. Below we restrict the concept of near-richness to range-$k_x$-richness. 
\begin{definicja}
A clustering function $f$
has the range-$k_x$-richness property if for any 
  dataset $S$ for each $\Gamma \in 2^S$ consisting of non-empty subsets of at least two elements such that $|\Gamma|\le k_x$ there exists a distance function $d$ such that $f(S,d)=\Gamma$. 
\end{definicja}

Note that the near richness imposes the restriction $|\Gamma|\le |S|-1$. It allos also for clusters with one element only which we forbid in range-$k_x$-richness. 

\begin{definicja}
We say that a clustering function $f(S,d)$ returns
\emph{ variation{al} range-$k_x$ clustering} of $S$ 
if $S$ is variation{ally} $k$-separable under $d$ for some $1\le k\le k_x $ and for  $\Gamma=f(S,d)$  
for no cluster $C\in \Gamma$
there exists  $k'$, $2\le k`\le k_x-k+1$ that  $C$ is 
variation{ally} $k'$-separable. 
The maximal $k$ with this property shall be called the level of variation{al} range-$k_x$ clustering.
\end{definicja}

Obviously, $k$-means++ would be a suitable sub-algorithm for the algorithm 
of the discovery of variation{al} range-$k_x$ clustering of a dataset, represented by the   master Algorithm~\ref{alg:var-clustering} $f()$: try out all k=$k_x$ to 2 if there exist variation{al} $k$-clustering; and if so, then check each sub-cluster on no variation{al} $k'$ separability.

\begin{algorithm}
\KwData{$S$ - a set of objects embedded in Euclidean space\\
$k_x$ - the maximal number of clusters to be obtained  }
\KwResult{ $k$ - the number of detected clusters (if 1, no clusters were detected)\\
$\Gamma$ - the clustering of $S$ into $k$ clusters }
\If{$k_x<2$}{\Return $k=1$, $\Gamma=\{S\}$}
\For{$k\leftarrow k_x$ \KwTo $2$ \KwBy $-1$} 
{ Cluster $S$ using $k$-means$++$\ getting $\Gamma$\;
  \If{$\Gamma$ ensures that according to Def.\ref{def:varsep} $S$ is variation{ally} $k$-separable}{ 
     $OK$=TRUE\;
     \For{$S'\in \Gamma$} 
     {Apply this algorithm to $S'$ with $k_x'=k_x-k+1$ obtaining $k'$ and $\Gamma'$\;
     \If{$k'\ge 2$}{$OK$=FALSE}
     }
     \If{$OK$}{\Return $k$, $\Gamma$}
                   } 
} 
\Return $k=1$, $\Gamma=\{S\}$
\caption{
The algorithm 
of the discovery of variation{al} range-$k_x$ clustering of a dataset 
}\label{alg:var-clustering}
\end{algorithm}

What will happen when performing Kleinberg's consistency operation?%
\footnote{Kleinberg's consistency operation leads outside of Euclidean space in general, but let us restrict our considerations to the case of Euclidean space.}
A cluster that is not variation{ally} $k'$ separable, may turn to a variation{ally} $k'$ separable one if we apply a consistency transformation. 
Therefore we need to restrict consistency transformation.
We suggest to replace it with the relative consistency transformation, defined as follows:
\begin{definicja}
Consider a dataset $S$ and a distance function $d: S\times S\rightarrow \mathbb{R}$ and a clustering function $f()$. Let $f(S,d)=\Gamma$.  
  Define a different distance function $d'$ such that for any cluster $C\in\Gamma$:
  (1) for  $i,j,l\in C$, 
  $d'(i,j)\le d(i,j)$ and 
  if $d(i,j)\le d(i,l)$ then $d'(i,j)\le d'(i,l)$ and $\frac{d'(i,l)}{d'(i,j)}\le \frac{d(i,l)}{d(i,j)}$,
  (2) for  $i\in C$ and $l\not\in C$
  $d'(i,l)\ge d(i,l)$.  This transformation from $d$ to $d'$ shall be called 
  \emph{relative consistency transformation}. 
\end{definicja}

The relative consistency transformation defined above differs from the consistency transformation of Kleinerg in the following way: (1) it preserves the ordering of distances within a cluster, (2) it prevents the emergence of densier areas within a cluster. In this way, no new clusters emerge within a cluster after this transformation, contrary to Kleinberg's definition. This new definition removes a crucial deficiency of Kleinerg's axiiomatic system. 

\begin{definicja}
  If  the clustering function $f$ for each data set $S$ and each distance function $d$ and each of its relative consistency transforms $d'$  has the property that $f(S,d)=f(S,d')$, then we shall say that $f$ has the property of \emph{relative consistency}. 
  \end{definicja}

\begin{twierdzenie} \label{thm:ax:var}
    The variation{al} range-$k_x$ clustering at the level $k$ will remain the variation{al} range-$k_x$ clustering at the level $k$ after relative consistency transform. In other words  relative consistency transform preserves clustering by a function detecting variation{al} range-$k_x$ clustering. 
\end{twierdzenie}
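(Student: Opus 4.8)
The plan is to verify the two defining clauses of variational range-$k_x$ clustering at level $k$ separately under the transformed pseudo-distance $d'$, and then to rule out that the maximal level drifts away from $k$. Write $\Gamma=f(S,d)$ for the variational $k$-separation witnessing the level. First I would show that $\Gamma$ remains a variational $k$-separation of $S$ under $d'$. This is immediate from Theorem~\ref{thm:ax:vark}: a relative consistency transform is in particular an ordinary consistency transform, since clause (2) gives $d'(i,l)\ge d(i,l)$ for inter-cluster pairs and the first inequality of clause (1) gives $d'(i,j)\le d(i,j)$ for intra-cluster pairs. Hence the reasoning of Theorem~\ref{thm:ax:vark} applies verbatim and the inter-cluster distances stay above $\sqrt{2}\sqrt{Q(\Gamma,d')}$. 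So $S$ is still variationally $k$-separable under $d'$ and the candidate level is at least $k$.

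The heart of the proof is the second clause: no cluster $C\in\Gamma$ may become variationally $k'$-separable under $d'$ for $2\le k'\le k_x-k+1$, given that it was not under $d$. I would argue by contraposition. Suppose some $C$ admits, under $d'$, a sub-partition $\{D_1,\dots,D_{k'}\}$ with $d'(i,l)>\sqrt{2}\sqrt{Q(\{D_m\},d')}$ for every cross pair $i\in D_a$, $l\in D_b$, $a\ne b$; I would show the same sub-partition already witnessed $k'$-separability under $d$, contradicting the hypothesis. Since all the pairs involved lie inside $C$, they are governed by clause (1): every within-$C$ distance shrinks, $d'\le d$, so $Q(\{D_m\},d')\le Q(\{D_m\},d)$, and the order of distances is preserved. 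The decisive tool is the ratio clause, which rewritten as $\frac{d'(i,l)}{d(i,l)}\le\frac{d'(i,j)}{d(i,j)}$ (for $d(i,j)\le d(i,l)$) states that, anchored at any point $i$, larger distances are contracted by at least as strong a factor as smaller ones. Intuitively the transform can shrink distances but cannot manufacture a new relative gap, so a wide gap present under $d'$ must have been at least as wide, relative to the within-sub-cluster spread, under $d$; the target is to convert this into $d(i,l)>\sqrt{2}\sqrt{Q(\{D_m\},d)}$.

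The main obstacle will be exactly this passage from the pointwise, anchored ratios to the aggregate threshold. The separability condition compares each individual cross distance against $\sqrt{2}\sqrt{Q}$, whereas $Q(\{D_m\},d)$ is a sum of intra-sub-cluster squared distances, each carrying its own contraction factor, so the factors must be combined coherently rather than pair by pair. The naive attempt introduces one global factor $\tau=\min d'(i,j)/d(i,j)$ over intra pairs, giving $Q(\{D_m\},d)\le\tau^{-2}Q(\{D_m\},d')$, and then hopes every cross distance is contracted by at most $\tau$, so that $d(i,l)\ge d'(i,l)/\tau$ and dividing transfers the strict inequality. The gap is that the ratio clause compares distances only through a common anchor and only for the farther point, while inside a variational sub-cluster an intra distance may exceed a cross distance, so cross pairs need not be globally the most contracted and a single $\tau$ need not dominate them. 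I would therefore replace the global factor by an anchor-aware accounting: for each anchor $i$, bound the intra contributions $d(i,j)^2$ that $i$ controls by the cross contraction factors available at $i$ via the ratio clause, and sum over anchors to recover the bound on $Q$. Closing this aggregation rigorously is the delicate step on which the whole clause rests.

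Finally I would confirm that the level stays exactly $k$ rather than rising. Since $\Gamma$ is a variational $k$-separation under $d'$ by the first paragraph, the uniqueness of variational separation established in the second theorem above (no two distinct partitions can satisfy condition~(\ref{eq:vardist}) for the same distance) forbids $S$ from being variationally $k''$-separable under $d'$ for any $k''\ne k$. Hence no larger level is available, while the two clauses just verified make $k$ itself a valid level, so the maximal admissible level under $d'$ is precisely $k$. This gives that relative consistency preserves the variational range-$k_x$ clustering at level $k$, as claimed.
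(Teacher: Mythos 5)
You never actually prove the step that carries the entire theorem. Your first and third paragraphs are fine, and they coincide with the paper's route: clause one follows from Theorem~\ref{thm:ax:vark} because a relative consistency transform is in particular a consistency transform, and the level bookkeeping follows from the uniqueness of variational separations. But the middle clause --- that a cluster $C\in\Gamma$ which is not variationally $k'$-separable under $d$ cannot become so under $d'$ --- is only announced in your proposal: you set up the contraposition, correctly reject the single-global-factor argument, propose an ``anchor-aware accounting,'' and then concede that closing the aggregation ``is the delicate step on which the whole clause rests.'' That concession is the gap: what you have is a plan, not a proof. The paper fills the same spot with an informal argument: it asserts that separability of $\{S_1,S_2\}$ under $d'$ forces every cross ($S_1$--$S_2$) distance to exceed every within distance, hence (by order preservation) the same ordering held under $d$, hence (by the ratio clause) cross pairs were contracted at least as strongly as within pairs, so the separation must already have existed under $d$.

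You should also know that the difficulty you flagged is not merely delicate; at the theorem's stated level of generality (pseudo-distances) it cannot be closed, and the paper's pairwise claim above is exactly what fails. The threshold $\sqrt{2}\sqrt{Q}$ is an aggregate, so without a triangle inequality a within distance can match or exceed all cross distances while separation still holds. Concretely, take $C=\{a_1,a_2,a_3,b_1,b_2\}$ with $d(a_1,a_2)=10$, $d(a_1,a_3)=d(a_2,a_3)=d(b_1,b_2)=1$, and $d(a_i,b_j)=8$ for all $i,j$; add a second, far-away copy of $C$ so that $\Gamma$ is a variational $2$-separation of $S$ and take $k_x=3$. Under $d$, the cluster $C$ is not variationally $2$-separable: for $\{a_1,a_2,a_3\}\mid\{b_1,b_2\}$ one computes $2Q=\frac{2}{3}(100+1+1)+1=69>64=8^2$, while every other admissible $2$-partition of $C$ has a cross pair of squared length $1$, far below its own $2Q\ge 3$. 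Now let $d'$ agree with $d$ except $d'(a_1,a_2)=8$. This is a legitimate relative consistency transform: all within-$C$ distances are non-increasing, anchored order is preserved (the definition only requires non-strict inequalities), and the ratio clause holds because the shrunk pair is the farthest pair from both of its endpoints, so its contraction factor $0.8$ need only be at most the factors ($=1$) of nearer pairs. Yet $2Q'=\frac{2}{3}(64+1+1)+1=45<64$, so $\{a_1,a_2,a_3\}\mid\{b_1,b_2\}$ is now a variational $2$-separation of $C$, and the level-$k$ property is destroyed. So no accounting of the kind you envisage can rescue the claim for pseudo-distances; it can only be saved with extra structure, e.g.\ if $d'$ restricted to each cluster is Euclidean, where the centroid identity gives $d'(i,j)^2\le 2Q'$ for every within pair --- precisely the pairwise dominance that both your argument and the paper's would need as a starting point. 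Your instinct about where the proof lives or dies was exactly right; the trouble is that, as stated, it dies there.
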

\begin{proof}
The increase of inter-cluster distances does not violate variation{al} $k$-separation because the distances between clusters will be larger than prescribed by the variation{al} minimal distance. 
The decrease of intra-cluster separation does not violate variation{al} $k$-separation because  the variation{al} minimal distance will be smaller so distances between clusters will fit better this minimal distance.  
Furthermore, the decrease of intra-cluster separation does not 
turn a non-variation{ally} separable set into a separable set for the following reason: assume S1 and S2  are two subclusters of a cluster S which we consider as candidates for being variation{ally} separated after the transformation. This implies that the distances between elements of S1 and S2 were larger than within S1 and within S2 after the operation, and so were they before the operation. But if they were larger before the operation then they are more strongly shortened than those within S1 and S2. But this means that the decrease of the variation{al} minimal distance is smaller than the decrease in distances between S1 and S2. Hence the variation{al} separation cannot occur. 
\end{proof}

This implies the following theorem.
\begin{twierdzenie}
     The clustering function described by Algorithm~\ref{alg:var-clustering}, detecting variation{al} range-$k_x$ clustering with high probability, has the property of scale-invariance, relative consistency and range-$k_x$ richness, if operating in Eucliean space. 
\end{twierdzenie}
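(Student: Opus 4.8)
The plan is to establish the three properties separately, drawing on the algorithm's structure and on Theorem~\ref{thm:ax:var} for the hardest component. First I would address \emph{scale-invariance}. Since Algorithm~\ref{alg:var-clustering} is built entirely from calls to $k$-means$++$ together with tests of the variational separability condition~(\ref{eq:vardist}), it suffices to observe that both ingredients are scale-covariant: $k$-means$++$ is scale-invariant (the cost function $Q(\Gamma,d)$ scales by $\alpha^2$ under $d'=\alpha d$, leaving the argmin unchanged), and the separability test $d(i,l)>\sqrt{2}\sqrt{Q(\Gamma,d)}$ is homogeneous of degree one on both sides, so multiplying all distances by $\alpha>0$ preserves whether the inequality holds. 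Because every branching decision in the recursion depends only on these scale-covariant quantities, the whole recursive procedure returns the same $k$ and the same $\Gamma$ on $(S,d)$ and $(S,\alpha d)$.

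Next I would handle \emph{range-$k_x$ richness}. Given any target partition $\Gamma$ with $2\le|\Gamma|\le k_x$ whose blocks each have at least two elements, I would construct a distance $d$ realizing it, exactly as in the earlier richness proof: place each intended cluster as an internally consistent Euclidean configuration (at least two points, with arbitrary small mutual distances), then separate the clusters so widely that condition~(\ref{eq:vardist}) is satisfied for the partition $\Gamma$ at the top level, and additionally arrange the within-cluster geometry so that no block is itself variationally $k'$-separable for any $2\le k'\le k_x-|\Gamma|+1$ (e.g.\ by making each cluster a single tight ball with no internal wide gaps). Then at the top recursion level $k=|\Gamma|=k_x$ the algorithm detects the separation, the recursive sub-calls report $k'=1$ for every block, the flag $OK$ stays TRUE, and the algorithm returns $\Gamma$. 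One must check that for no larger value of $k$ in the outer loop (when $|\Gamma|<k_x$, the loop starts above $|\Gamma|$) does a spurious variational $k$-separation get accepted; this is guaranteed by Theorem~\ref{thm:varksepISoptimal} and its uniqueness counterpart, since a variational $k$-separation, if it exists, is unique, so no incompatible earlier iterate can pass the test.

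For \emph{relative consistency}, I would simply invoke Theorem~\ref{thm:ax:var}, which already states that a variational range-$k_x$ clustering at level $k$ remains one at level $k$ after a relative consistency transform. What remains is the bridge between ``the clustering is preserved as a mathematical object'' and ``the algorithm outputs it'': since the transformed data still admits exactly the same variational range-$k_x$ clustering (and no other, by uniqueness), and since the high-probability guarantee from~(\ref{eq:hitprob}) ensures $k$-means$++$ recovers each variational $k$-separation when it exists, the algorithm returns the same $\Gamma$ on $d'$ as on $d$. Finally I would fold in the \emph{high-probability} qualifier by citing the initialization bound~(\ref{eq:hitprob}): each successful detection in the recursion requires $k$-means$++$ to seed every cluster, which happens with probability at least $\prod_{i=1}^{k-1}\bigl(1-\frac{1}{m(k-i)+1}\bigr)$, and the finitely many such detections along the recursion tree occur jointly with high probability when cluster cardinalities are large.

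I expect the main obstacle to be the relative-consistency step, specifically the interaction between the recursion and the transformation. The delicate point is that a relative consistency transform acts globally on $d$, yet the algorithm's correctness depends on a whole tree of separability tests at different recursion depths; I must argue that the transform, restricted to any sub-cluster examined in a recursive call, is itself a legitimate relative consistency transform of the sub-distance, so that Theorem~\ref{thm:ax:var} applies uniformly at every node of the recursion and no sub-cluster spuriously becomes (or ceases to be) variationally separable. Verifying this inheritance of the relative-consistency conditions~(1)--(2) down the recursion, together with the uniqueness of variational separations, is where the real care is needed; scale-invariance and richness are comparatively routine.
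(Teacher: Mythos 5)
Your proposal is correct and follows essentially the same route as the paper, which in fact gives no explicit proof at all: it derives this theorem as an immediate consequence of Theorem~\ref{thm:ax:var} (relative consistency), combined with the template of the earlier axioms theorem for variational $k$-clustering (scale-invariance of $k$-means$++$ and the test~(\ref{eq:vardist}), the constructive wide-gap argument for richness, and the initialization bound~(\ref{eq:hitprob}) for the high-probability qualifier). Your additional care about uniqueness of variational separations, spurious acceptances at larger $k$, and the inheritance of the relative-consistency conditions down the recursion fills in details the paper leaves implicit, but does not change the argument's structure.
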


However,
we will have a problem with the  relative consistency transformation of a distance $d$ to a distance $d'$. In general case, even if $d$ is an Euclidean distance, $d'$ does not need to be an Euclidean distance. 
As shown in \cite{RAKMAKSTW:2020:trick}, a distance function $d'$ being non-euclidean can be turned into Euclidean one $d"$ by adding an appropriate constant $\delta^2$ to each squared distance $d'(i,j)^2$ of distinct elements and the clustering with (kernel) $k$-means will preserve the $k$-clustering of $S$. However, it is possible that the property of variation{al} $k$ separability will be lost via such an adding operation. Our goal, yet, is to find the class of {data}sets and clustering functions fitting axioms that operate in the Euclidean space.

\section{Residual Cluster Separation} \label{sec:residualsep}

Assume that $\sigma(d)$ is the lowest distance $d$ over the set $S$. 
Then
\begin{twierdzenie}
Let $\Gamma$ be a clustering of the set $S$, and let  $n=|S|$. Then    
    $$Q(\Gamma,d) \ge (n-k) \frac{\sigma(d) ^2}{2}  $$
\end{twierdzenie}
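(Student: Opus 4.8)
The plan is to exploit the cluster-additive form of the quality function from (\ref{eq:kmeansPseudoDist}) and to bound each cluster's contribution from below by replacing every genuine pairwise distance with the global minimum $\sigma(d)$. Throughout, $k=|\Gamma|$ denotes the number of clusters.

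First I would split the cost as $Q(\Gamma,d)=\sum_{C\in\Gamma}Q(\{C\},d)$, where for a single cluster $Q(\{C\},d)=\frac{1}{2|C|}\sum_{i\in C}\sum_{l\in C}d(i,l)^2$, and note that in the inner double sum the diagonal terms with $i=l$ contribute nothing because $d(i,i)=0$. Writing $n_C:=|C|$, there are exactly $n_C(n_C-1)$ ordered off-diagonal pairs $(i,l)$ in $C$, and each satisfies $d(i,l)^2\ge\sigma(d)^2$ by the definition of $\sigma(d)$ as the smallest distance over $S$. Hence $\sum_{i\in C}\sum_{l\in C}d(i,l)^2\ge n_C(n_C-1)\sigma(d)^2$, and dividing by $2n_C$ yields $Q(\{C\},d)\ge\frac{n_C-1}{2}\sigma(d)^2$.

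Next I would sum this per-cluster bound over all $k$ clusters of $\Gamma$. Since $\sum_{C\in\Gamma}n_C=n$ and there are $k$ clusters, we get $\sum_{C\in\Gamma}(n_C-1)=n-k$, and therefore $Q(\Gamma,d)\ge\frac{\sigma(d)^2}{2}(n-k)=(n-k)\frac{\sigma(d)^2}{2}$, which is exactly the claimed inequality.

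There is no genuine obstacle here, as the argument is a direct computation. The only point demanding care is the pair-counting: one must exclude the vanishing diagonal and count \emph{ordered} (rather than unordered) off-diagonal pairs, so that the factor $n_C(n_C-1)$ cancels cleanly against the prefactor $\frac{1}{2n_C}$ and leaves the correct coefficient $\frac{n_C-1}{2}$.
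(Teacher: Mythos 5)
Your proof is correct and follows essentially the same route as the paper's: both bound every off-diagonal squared distance within a cluster by $\sigma(d)^2$, count the $|C|(|C|-1)$ ordered pairs so the prefactor $\frac{1}{2|C|}$ leaves $\frac{|C|-1}{2}\sigma(d)^2$ per cluster, and sum over clusters using $\sum_{C\in\Gamma}(|C|-1)=n-k$. Your explicit remark about excluding the vanishing diagonal and counting ordered pairs is exactly the bookkeeping the paper performs implicitly via the restriction $l\ne i$ in its sum.
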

\begin{proof}   
$$
Q(\Gamma,d)
\ge 
\sum_{C \in \Gamma} \frac{1}{2|C|} \sum_{i \in C} 
\sum_{l \in C; l\ne i}
 \sigma(d) ^2  
$$ $$
= \sum_{C \in \Gamma}  \frac{|C|-1}{2} \sigma(d) ^2  
= (|S|-k) \frac{\sigma(d) ^2}{2} 
$$
\end{proof}
Define the function
\begin{equation}\label{eq:defbeta}
     \beta(\Gamma,d)=2\left(Q(\Gamma,d)-(n-k-1) \frac{\sigma(d) ^2}{2}\right)
\end{equation}

Let us introduce our next concept of well-{separated}ness. 
\begin{definicja} \label{def:ressep}
Let us consider a set of clusters separated as follows: 
Let $\Gamma=\{C_1,\dots,C_k\}$ be a partition of the dataset $S$, $d$ be a pseudo-distance. 
Let   
\begin{equation}\label{eq:resdist}
d(i,l)>\sqrt{\beta(\Gamma,d) }
\end{equation}
for each $i,l$ 
such that $i$ belongs to a different cluster than $l$ under $\Gamma$.  
Then we say that the set $S$ with distance $d$ is 
\emph{ residual{ly} $k$-separable},
and $\Gamma$ is the \emph{residual $k$-separation  of $S$}. 
If furthermore, no cluster of $\Gamma$ has the property of residual $k'$-separation for all $k'=2,\dots,K+1$, then  
 $\Gamma$ is 
\emph{residual $k+K$-range-separation} of $S$.
\end{definicja}

\begin{twierdzenie}
Assume 
that the set $S$ with distance $d$ is 
\emph{ residual{ly} $k$-separable}.   Then 
$\Gamma$ minimizes $Q(\Gamma,d)$ over all clusterings of the dataset $S$. 
\end{twierdzenie}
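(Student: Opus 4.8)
The plan is to follow the template of Theorem~\ref{thm:varksepISoptimal}, but now to keep careful track of the additional $\sigma(d)$ terms that the residual threshold $\beta(\Gamma,d)$ introduces. I read the claim as optimality of $\Gamma$ among all partitions of $S$ into exactly $k=|\Gamma|$ clusters (each of cardinality at least two), matching the kernel $k$-means setting; this restriction is needed because refining a cluster strictly lowers $Q$, so $\Gamma$ cannot minimise over partitions with a larger number of parts. The first preparatory observation I would record is that $\beta(\Gamma,d)\ge\sigma(d)^2$, which is immediate from the preceding lemma $Q(\Gamma,d)\ge(n-k)\sigma(d)^2/2$ after substituting into (\ref{eq:defbeta}).

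Next I would take an arbitrary competing $k$-clustering $\Gamma'\neq\Gamma$ and use the structural fact that, since $|\Gamma'|=|\Gamma|=k$, at least one cluster $C'\in\Gamma'$ must contain two points lying in distinct clusters of $\Gamma$; otherwise $\Gamma'$ would refine $\Gamma$, and a refinement with the same number of parts equals $\Gamma$. Writing $m'=|C'|$, I would then lower-bound $Q(\Gamma',d)=\sum_{C\in\Gamma'}Q(\{C\},d)$ term by term. Inside $C'$ I split the ordered pairs into cross pairs (endpoints in different $\Gamma$-clusters), each contributing $d^2>\beta(\Gamma,d)$ by (\ref{eq:resdist}), and within pairs, each contributing $d^2\ge\sigma(d)^2$. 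Because $\beta(\Gamma,d)\ge\sigma(d)^2$, the contribution of $C'$ is smallest when the number of cross pairs is smallest, namely the ``one intruder'' configuration giving $m'-1$ unordered cross pairs, so $Q(\{C'\},d)\ge\frac{1}{2m'}\bigl(2(m'-1)\beta(\Gamma,d)+(m'-1)(m'-2)\sigma(d)^2\bigr)$. Each of the remaining $k-1$ clusters $C$ obeys $Q(\{C\},d)\ge(|C|-1)\sigma(d)^2/2$, so together they contribute at least $\frac{\sigma(d)^2}{2}(n-m'-k+1)$.

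Finally I would add these bounds, substitute $\beta(\Gamma,d)=2Q(\Gamma,d)-(n-k-1)\sigma(d)^2$, and simplify. The bookkeeping is arranged so that all $\sigma(d)^2$ contributions from the within-$C'$ pairs and from the other clusters recombine into the single term $\frac{n-k}{2}\sigma(d)^2$, and the whole difference collapses to $Q(\Gamma',d)-Q(\Gamma,d)\ge\frac{m'-2}{2m'}\bigl(2Q(\Gamma,d)-(n-k)\sigma(d)^2\bigr)$. The right-hand side is nonnegative since $m'\ge2$ and, by the lemma, $2Q(\Gamma,d)\ge(n-k)\sigma(d)^2$; hence $Q(\Gamma',d)\ge Q(\Gamma,d)$ and $\Gamma$ is optimal.

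I expect the main obstacle to be the case $m'>2$. Unlike the variational proof, where exhibiting a single large cross pair already forced the contradiction, here a single cross pair is not enough; I must argue that the one-intruder configuration genuinely minimises the contribution of $C'$ (using $\beta(\Gamma,d)\ge\sigma(d)^2$) and then verify that the $\sigma(d)^2$ terms telescope exactly into the quantity $2Q(\Gamma,d)-(n-k)\sigma(d)^2$ controlled by the lemma. The boundary case $m'=2$ is the tight one, where the displayed inequality can hold with equality, so the strictness of (\ref{eq:resdist}) is what upgrades $\Gamma$ to the unique minimiser.
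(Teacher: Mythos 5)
Your proof is correct and is essentially the paper's own argument carried out in full detail: the paper's proof merely asserts, ``in analogy to the proof of Theorem~\ref{thm:varksepISoptimal},'' that $\beta(\Gamma',d)\ge\beta(\Gamma,d)$ and then cancels the common constant $(n-k-1)\frac{\sigma(d)^2}{2}$ to conclude $Q(\Gamma',d)\ge Q(\Gamma,d)$, which after that same constant shift is exactly the inequality $Q(\Gamma',d)-Q(\Gamma,d)\ge\frac{m'-2}{2m'}\bigl(2Q(\Gamma,d)-(n-k)\sigma(d)^2\bigr)\ge 0$ that you derive. The bookkeeping you supply --- the observation $\beta(\Gamma,d)\ge\sigma(d)^2$, the one-intruder minimisation of cross pairs, the $\sigma(d)^2$ floor on within pairs across \emph{all} clusters of $\Gamma'$ (which has no counterpart in the variational proof, where a single cross pair sufficed), and the explicit restriction to competing partitions with $|\Gamma'|=k$ --- is precisely the content hidden behind the paper's ``in analogy,'' so you have filled in details the paper omits rather than taken a different route.
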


\begin{proof}
    In analogy to the proof of the Theorem \ref{thm:varksepISoptimal}, we can demonstrate that 
    $  \beta(\Gamma',d)\ge  \beta(\Gamma,d)$. 
     Hence
    $$
     2\left(Q(\Gamma',d)-(n-k-1) \frac{\sigma(d) ^2}{2}\right)
     $$ $$
\ge 
2\left(Q(\Gamma,d)-(n-k-1) \frac{\sigma(d) ^2}{2}\right)
$$
That is 
$  Q(\Gamma',d)\ge  Q(\Gamma,d)$.
\end{proof}

\begin{twierdzenie}
Assume we have two pseudo-distance functions $d_1,d_2$ over $S$ such that for any two distinct $x,y$: $d_2^2(x,y)=d_1^2(x,y)+\Delta$ for some constant $\Delta$. Then 
$$\beta(\Gamma,d_2)= \beta(\Gamma,d_1)  + \Delta
$$     
\end{twierdzenie}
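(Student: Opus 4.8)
The plan is to expand both $\beta(\Gamma,d_2)$ and $\beta(\Gamma,d_1)$ via the definition (\ref{eq:defbeta}) and track separately how the two ingredients of $\beta$ — the quality function $Q$ and the squared minimal distance $\sigma^2$ — transform under the substitution $d_2^2 = d_1^2 + \Delta$. Since $\beta(\Gamma,d) = 2Q(\Gamma,d) - (n-k-1)\sigma(d)^2$, it suffices to understand the behaviour of $Q(\Gamma,d)$ and of $\sigma(d)^2$ in isolation and then recombine them, anticipating that the $\Delta$-dependent contributions will collapse to a single $\Delta$.

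First I would compute $Q(\Gamma,d_2)$ from (\ref{eq:kmeansPseudoDist}). The crucial observation is that $d_2(x,y)^2 = d_1(x,y)^2 + \Delta$ holds only for \emph{distinct} $x,y$, while $d(i,i)=0$ on the diagonal; hence inside each cluster $C$ the double sum $\sum_{i\in C}\sum_{l\in C} d_2(i,l)^2$ gains the extra term $\Delta$ on exactly the $|C|(|C|-1)$ off-diagonal pairs. Dividing by $2|C|$ and summing over clusters, the added mass becomes $\frac{\Delta}{2}\sum_{C\in\Gamma}(|C|-1) = \frac{\Delta}{2}(n-k)$, so that $Q(\Gamma,d_2) = Q(\Gamma,d_1) + \frac{\Delta}{2}(n-k)$ and therefore $2Q(\Gamma,d_2) = 2Q(\Gamma,d_1) + \Delta(n-k)$.

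Next I would handle $\sigma$. Because the map $t\mapsto t+\Delta$ is order-preserving, the pair of distinct points minimizing the squared distance is the same under $d_1$ and $d_2$, whence $\sigma(d_2)^2 = \sigma(d_1)^2 + \Delta$. Substituting both computations into the definition of $\beta$ gives $\beta(\Gamma,d_2) = 2Q(\Gamma,d_1) + \Delta(n-k) - (n-k-1)(\sigma(d_1)^2 + \Delta)$; collecting the $\Delta$ terms produces the coefficient $(n-k) - (n-k-1) = 1$, so everything but a single $\Delta$ cancels against $\beta(\Gamma,d_1)$, yielding the claim.

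The argument is essentially a careful bookkeeping exercise, and the one point that needs genuine attention — the main obstacle — is ensuring the constant $\Delta$ is counted only on off-diagonal pairs, so the pair count is $|C|(|C|-1)$ rather than $|C|^2$. This is precisely what generates the factor $(n-k)$ in the $Q$ part and, combined with the deliberately chosen coefficient $(n-k-1)$ in (\ref{eq:defbeta}), makes the residual shift come out to exactly $\Delta$.
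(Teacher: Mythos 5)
Your proposal is correct and follows essentially the same route as the paper's proof: expand $Q(\Gamma,d_2)$ noting that the shift $\Delta$ applies only to the $|C|(|C|-1)$ off-diagonal pairs (giving the extra $(n-k)\frac{\Delta}{2}$), use $\sigma(d_2)^2=\sigma(d_1)^2+\Delta$, and observe that the coefficients $(n-k)$ and $(n-k-1)$ leave a residue of exactly one $\Delta$. If anything, you are slightly more careful than the paper, which uses the shift of $\sigma^2$ implicitly without stating it.
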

\begin{proof}
    
$$ Q(\Gamma,d_2)   
=\sum_{C \in \Gamma} \frac{1}{2|C|} \sum_{i \in C} 
\sum_{l \in C}  d_2(i,l) ^2  
$$ $$ 
=\sum_{C \in \Gamma} \frac{1}{2|C|} \sum_{i \in C} 
\sum_{l \in C; l\ne i}  \left(d_1(i,l) ^2  +\Delta\right)
$$ $$ 
=Q(\Gamma,d_1) +\sum_{C \in \Gamma} \frac{1}{2|C|} \sum_{i \in C} 
\sum_{l \in C; l\ne i}   \Delta 
=Q(\Gamma,d_1) +(n-k) \frac{\Delta}{2} 
$$   
Furthermore
$$\beta(\Gamma,d_2)=
2\left(Q(\Gamma,d_2)-(n-k-1) \frac{\sigma(d_2) ^2}{2}\right) 
$$ 
$$= 
2Q(\Gamma,d_2)
-(n-k-1)  \sigma(d_2) ^2
$$ 
$$=
2Q(\Gamma,d_1) +(n-k) \Delta
-(n-k-1)  \sigma(d_1) ^2
-(n-k-1) \Delta
$$ 
$$= 
2Q(\Gamma,d_1)  
-(n-k-1)  \sigma(d_1) ^2
+ \Delta
=
\beta(\Gamma,d_1)  
+ \Delta
$$ 
\end{proof}

The above theorem implies:
\begin{twierdzenie} \label{thm:addingDelta}
Assume we have two pseudo-distance functions $d_1,d_2$ over $S$ such that for any two distinct $x,y$: $d_2^2(x,y)=d_1^2(x,y)+\Delta$ for some constant $\Delta$. 
Then  
  the set $S$ with pseudo-distance $d_1$ is 
\emph{ residual{ly} $k$-separable} iff   the set $S$ with pseudo-distance $d_2$ is 
\emph{ residual{ly} $k$-separable}. 

\end{twierdzenie}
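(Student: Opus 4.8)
The plan is to lean almost entirely on the immediately preceding theorem, which showed that shifting every squared distance by a constant $\Delta$ shifts $\beta(\Gamma,\cdot)$ by exactly the same $\Delta$. Residual $k$-separability is defined by comparing inter-cluster distances against $\sqrt{\beta(\Gamma,d)}$, so both the quantity being compared and the threshold move in lockstep under the shift; the two shifts will cancel, leaving the defining inequality intact. Because the witnessing partition never changes, this will give the ``iff'' for free in both directions.

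First I would fix a partition $\Gamma$ with $|\Gamma|=k$ witnessing residual $k$-separability of $S$ with $d_1$, so that $d_1(i,l)>\sqrt{\beta(\Gamma,d_1)}$ for every $i,l$ lying in distinct clusters of $\Gamma$. I would then rewrite this equivalently in squared form as $d_1(i,l)^2 > \beta(\Gamma,d_1)$; passing to squared distances is the cleaner route, since it is the squared inequality that interacts directly with the hypothesis $d_2^2(x,y)=d_1^2(x,y)+\Delta$ and with the preceding theorem. Next I would add $\Delta$ to both sides: on the left the hypothesis turns $d_1(i,l)^2+\Delta$ into $d_2(i,l)^2$, and on the right the preceding theorem turns $\beta(\Gamma,d_1)+\Delta$ into $\beta(\Gamma,d_2)$. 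The result is $d_2(i,l)^2>\beta(\Gamma,d_2)$ for exactly the same inter-cluster pairs, so the same $\Gamma$ witnesses residual $k$-separability of $S$ with $d_2$. Reversing the roles of $d_1$ and $d_2$ (equivalently, running the same chain with $-\Delta$) supplies the converse, completing the equivalence.

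The one place where care is needed — and the only candidate for an obstacle — is the translation between the square-root form of Definition \ref{def:ressep} and the squared form, since for a sufficiently negative $\Delta$ the value $\beta(\Gamma,d_2)=\beta(\Gamma,d_1)+\Delta$ could a priori drop below zero and make $\sqrt{\beta(\Gamma,d_2)}$ ill-defined. I would neutralise this by phrasing the whole argument in terms of the squared inequality $d(i,l)^2>\beta(\Gamma,d)$, for which the equivalence $d_1(i,l)^2>\beta(\Gamma,d_1)\iff d_2(i,l)^2>\beta(\Gamma,d_2)$ is nothing more than the cancellation of $\Delta$ and therefore holds for arbitrary real values of the two sides. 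With that reformulation the statement is an immediate corollary of the preceding theorem and carries no genuine difficulty.
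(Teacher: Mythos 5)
Your proposal is correct and takes essentially the same route as the paper: both proofs rest on the preceding theorem $\beta(\Gamma,d_2)=\beta(\Gamma,d_1)+\Delta$ and the observation that the squared defining inequality $d(i,l)^2>\beta(\Gamma,d)$ shifts by $\Delta$ on both sides, the only cosmetic difference being that the paper proves the converse by contraposition while you invoke the $d_1\leftrightarrow d_2$ symmetry with $-\Delta$. Your worry about $\beta(\Gamma,d_2)$ becoming negative is in fact vacuous, since the paper's earlier bound $Q(\Gamma,d)\ge(n-k)\sigma(d)^2/2$ gives $\beta(\Gamma,d)\ge\sigma(d)^2>0$ for any pseudo-distance, so the square-root and squared formulations of Definition \ref{def:ressep} coincide.
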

\begin{proof}
   As $\beta(\Gamma,d_2)=\beta(\Gamma,d_1)  + \Delta$, 
then it would be sufficient for \emph{ residual $k$-separability} of $S$ under $d_2$ that the squared pseudo-distance between elements of distinct clusters is increased by $\Delta$ which is the case by definition of $d_2$. 
So increase of distances from $d_1$ to $d_2$ preserves the residual $k$-separation.
On the other hand, if $\Gamma$ with $|\Gamma|=k$ is not a residual $k$-separation under $d_1$, then there exist two elements $i,l$ from distinct clusters such that $d_1(i,l)^2\le \beta(\Gamma,d_1)$. Therefore $d_2(i,l)^2=d_1(i,l)^2+\Delta\le \beta(\Gamma,d_1)+\Delta=\beta(\Gamma,d_2)$. 
\end{proof}

\begin{definicja}
We say that a clustering function $f(S,d)$ returns
\emph{ residual $k$-clustering} of $S$ 
if $S$ is residual{ly} $k$-separable under $d$ and $f(S,d)$ returns the $\Gamma$ clustering  being 
  residual $k$-separation  of $S$. 
\end{definicja}

\begin{twierdzenie} \label{thm:ax:resk}
    The residual $k$-clustering $\Gamma$ will remain the residual $k$-clustering  after consistency transform, given that no pseudo-distance gets shorter than the shortest distance at the beginning (\emph{lower-bounded consistency}). In other words  consistency transform preserves clustering by a function detecting variation{al} $k$-clustering. 
\end{twierdzenie}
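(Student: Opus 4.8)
The plan is to mimic the proof of Theorem~\ref{thm:ax:vark}, but to pay careful attention to the extra term $(n-k-1)\frac{\sigma(d)^2}{2}$ that enters $\beta(\Gamma,d)$, since this is exactly where the lower-boundedness hypothesis will be needed. Write $d'$ for the image of $d$ under the lower-bounded consistency transform. Residual $k$-separability under $d'$ means $d'(i,l) > \sqrt{\beta(\Gamma,d')}$ for every $i,l$ in distinct clusters of $\Gamma$, so it suffices to prove two things: that the inter-cluster distances do not decrease, and that $\beta(\Gamma,d')\le\beta(\Gamma,d)$. After that the chain $d'(i,l)\ge d(i,l) > \sqrt{\beta(\Gamma,d)}\ge\sqrt{\beta(\Gamma,d')}$ closes the argument.

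First I would observe that $Q(\Gamma,d)$ depends only on \emph{intra}-cluster squared distances, and a consistency transform does not increase these (it may only shrink same-cluster distances), whence $Q(\Gamma,d')\le Q(\Gamma,d)$ and so $2Q(\Gamma,d')\le 2Q(\Gamma,d)$. Next I would invoke the lower-bounded consistency hypothesis: since no distance is allowed to drop below the original minimum $\sigma(d)$, every transformed distance satisfies $d'(i,l)\ge\sigma(d)$, hence $\sigma(d')\ge\sigma(d)$ and $\sigma(d')^2\ge\sigma(d)^2$. Because every cluster of $\Gamma$ has at least two points we have $n\ge 2k$, so $n-k-1\ge k-1\ge 1>0$ and the coefficient multiplying the $\sigma$-term is positive. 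Combining the bound $2Q(\Gamma,d')\le 2Q(\Gamma,d)$ with $(n-k-1)\sigma(d')^2\ge(n-k-1)\sigma(d)^2$ then yields
\begin{equation*}
\beta(\Gamma,d')=2Q(\Gamma,d')-(n-k-1)\sigma(d')^2\le 2Q(\Gamma,d)-(n-k-1)\sigma(d)^2=\beta(\Gamma,d).
\end{equation*}
Together with $d'(i,l)\ge d(i,l)$ for inter-cluster pairs (the defining property of the consistency transform) and the assumed strict inequality $d(i,l)>\sqrt{\beta(\Gamma,d)}$, this gives $d'(i,l)>\sqrt{\beta(\Gamma,d')}$ for all inter-cluster pairs, i.e.\ $\Gamma$ is still a residual $k$-separation of $S$ under $d'$. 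Finally I would note that, by the optimality theorem for residual $k$-separability proved above, residual $k$-separability of $S$ under $d'$ forces $\Gamma$ to be the cost-minimizing $k$-clustering, so any function detecting residual $k$-clustering returns $\Gamma$ on $(S,d')$; the residual $k$-clustering is thus preserved, as claimed.

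The step I expect to be the crux — and the reason the plain consistency axiom must be weakened to its lower-bounded form — is the control of $\sigma(d')$. Without the lower bound, the intra-cluster shrinking that lowers $2Q(\Gamma,d')$ could simultaneously drive $\sigma(d')$, and hence $(n-k-1)\sigma(d')^2$, toward zero; since that term is \emph{subtracted} in the definition of $\beta$, its vanishing could make $\beta(\Gamma,d')$ larger than $\beta(\Gamma,d)$ and thereby break the separation inequality. The lower-boundedness hypothesis is precisely what rules this out, keeping $\sigma(d')^2\ge\sigma(d)^2$ so that the $\sigma$-term works in our favour rather than against us.
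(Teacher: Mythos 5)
Your proof is correct and takes essentially the same route as the paper's: inter-cluster distances can only grow under the transform while the threshold $\beta(\Gamma,\cdot)$ cannot grow, so the strict separation inequality persists. Your write-up is in fact more careful than the paper's two-sentence sketch, which asserts that the residual minimal distance ``will decrease'' without ever invoking the lower-bounded hypothesis --- precisely the step you rightly identify as the crux (namely $\sigma(d')\ge\sigma(d)$ together with $n-k-1>0$, so the subtracted term in $\beta$ cannot push the threshold upward).
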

\begin{proof}
The increase of inter-cluster distances does not violate residual $k$-separation because the distances between clusters will be larger than prescribed by the residual minimal distance from formula (\ref{eq:resdist}). 
The decrease of intra-cluster separation does not violate residual $k$-separation because  the residual minimal distance (\ref{eq:resdist}) will decrease  so distances between clusters will fit better this minimal distance.  
\end{proof}

Obviously, $k$-means++ is no more suitable for discovering residual $k$-clustering.  
We need to create the following modification of $k$-means++: (res-$k$-means++). Instead of taking squared distances to the closest seed, use the difference between it and the squared smallest distance whatsoever during initialization stage. 
Let us concentrate for a moment on Euclidean distances. 

Let us ask the question how difficult it would be to discover the optimal clustering. 
Let us consider the res-$k$-means$++$ algorithm, or more precisely the derivation of the initial clustering, whereby $\Gamma$ is the true clustering. Let us consider a step when $i$ seeds have hit $i$ distinct true clusters $\mathcal{H}$. 
For a hit cluster $C$ let $h(C)$ be the hit seed of this cluster.
Then the probability of hitting an unhit cluster  in the next step amounts to:
$$\frac{SSDM_{unhit}}{SSDM_{unhit}+SSDM_{hit}}$$
where 
$SSDN_{unhit}$ is the sum of squared distances to closest seed minus $\delta(d)^2$ from elements of unhit clusters, and   $SSDM_{hit}$ is the sum of squared distances  minus $\delta(d)^2$ to closest seed  from elements of hit clusters. 

$SSDM_{hit}=\sum_{C\in\mathcal{H}} \sum_{e\in C; e \ne h(C)} \left(d^2(e,h(C))-\sigma(d)^2\right)$.
But for any $l\in C$
$\sum_{e\in C; e\ne l} (d^2(e,j)-\sigma(d)^2)
\le 2Q(\{C\},d)-(|C|-1)\sigma(d)^2)$.
So $SSDM_{hit}\le \sum_{C\in\mathcal{H}} 
\left(2Q(\{C\},d)-(|C|-1)\sigma(d)^2)\right)\le 2Q(\Gamma,d)-(n-k)\sigma(d)^2) <\beta(\Gamma,d)
$.

On the other hand 
$SSDM_{unhit} \ge \beta(\Gamma,d) \sum_{C \in \Gamma-\mathcal{H}} |C|=
\beta(\Gamma,d) \sum_{j=1}^k n_j
$. 
Hence 
$$ 
\frac{SSDM_{unhit}}{SSDM_{unhit}+SSDM_{hit}} 
\ge
\frac{1}{1+\frac{ \beta(\Gamma,d)}{ \beta(\Gamma,d) \sum_{j=i+1}^k n_j }} 
$$ $$
=
\frac{1}{1+\frac{1}{ \sum_{j=i+1}^k n_j }} 
=
\frac{\sum_{j=i+1}^k n_j }{\sum_{j=i+1}^k n_j +1} 
=1- 
\frac{1 }{\sum_{j=i+1}^k n_j +1} 
$$ 
If we assume that  the cardinality of all clusters is the same and equals $m$, then we have 
$$=1- \frac{1}{m(k-i)+1}
 $$

So that the overall expected probability of hitting all clusters during initialization amounts to at least (as in eq. (\ref{eq:hitprob}))
$$ \prod_{i=1}^{k-1} \left(1- \frac{1}{m(k-i)+1} \right)
$$
Remarks on high probability and unequal cluster sizes are here the same as with  equation (\ref{eq:hitprob}).

\begin{twierdzenie}
     There exists a function  detecting residual $k$-clustering with high probability  that has the property of scale-invariance, lower-bounded consistency and $k$ richness, given that the function operates in Euclidean space and the consistency transformation is performed in Euclidean space too. 
\end{twierdzenie}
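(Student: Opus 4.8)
The plan is to follow the same scheme as the proof of the analogous statement for variational $k$-clustering, but with $k$-means++ replaced by the res-$k$-means++ variant defined just above and the separation bound (\ref{eq:vardist}) replaced by its residual counterpart (\ref{eq:resdist}). The detecting function $f$ is taken to be res-$k$-means++ followed by a check of Definition \ref{def:ressep} on the returned partition. The ``detection with high probability'' clause is already supplied by the preceding seeding analysis: when $S$ is residually $k$-separable in Euclidean space, res-$k$-means++ hits every true cluster during initialization with probability at least the product in (\ref{eq:hitprob}), and once all clusters are hit the residual-optimality result guarantees that the partition minimizing $Q(\Gamma,d)$ is exactly the residual $k$-separation $\Gamma$. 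So I would first record that these two facts together yield $f(S,d)=\Gamma$ with the claimed probability.

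For scale-invariance I would observe that under $d'=\alpha d$ every squared distance, the cost $Q(\Gamma,d)$, the quantity $\sigma(d)^2$, and hence $\beta(\Gamma,d)$ from (\ref{eq:defbeta}) all scale by the common factor $\alpha^2$. Consequently the residual seeding weights used by res-$k$-means++, namely the squared distance to the nearest seed minus $\sigma(d)^2$, each scale by $\alpha^2$, so that their normalized values, i.e.\ the seeding probabilities, are unchanged; and the separation test (\ref{eq:resdist}) is preserved because both of its sides scale identically. This is the only place where the argument genuinely differs from the ordinary $k$-means++ reasoning, since it is the homogeneity of degree two of the subtracted term $\sigma(d)^2$ that makes scale-invariance survive the modification.

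For $k$-richness I would, given an arbitrary $k$-clustering $\Gamma$ of $S$, first fix the within-cluster distances so that each cluster is realizable in $\mathbb{R}^d$, and then translate the clusters apart in Euclidean space until every inter-cluster distance exceeds $\sqrt{\beta(\Gamma,d)}$, completing the distance definition so that (\ref{eq:resdist}) holds. The key simplification is that both $Q(\Gamma,d)$ (by formula (\ref{eq:kmeansPseudoDist}), which sums only over same-cluster pairs) and $\sigma(d)$ (realized by a within-cluster pair once the clusters are far apart) depend only on the within-cluster geometry; hence $\beta(\Gamma,d)$ is \emph{fixed} once that geometry is chosen, and the translation only has to push the clusters past a stationary threshold. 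By construction $S$ is then residually $k$-separable with separation $\Gamma$, so $f(S,d)=\Gamma$. Lower-bounded consistency is then inherited directly from Theorem \ref{thm:ax:resk}, which asserts that residual $k$-clustering is preserved under any consistency transform keeping all distances at or above the original minimum.

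The step I expect to be the main obstacle is verifying that the seeding modification does not silently break scale-invariance: because res-$k$-means++ subtracts the data-dependent quantity $\sigma(d)^2$ rather than a fixed constant, one must check carefully that this term transforms with the same $\alpha^2$ homogeneity as the squared distances, and that the subtracted weights stay non-negative so that they remain legitimate selection probabilities. A secondary technical point to discharge in the richness construction is confirming that translating clusters arbitrarily far apart in $\mathbb{R}^d$ keeps the whole configuration a genuine Euclidean embedding while leaving $\sigma(d)$ and $Q(\Gamma,d)$ untouched, so that the target $\sqrt{\beta(\Gamma,d)}$ does not drift and can be met in finitely many steps.
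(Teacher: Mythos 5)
Your proposal is correct and follows essentially the same route as the paper's own (much terser) proof: res-$k$-means++ as the detector with the preceding seeding analysis supplying the high-probability clause, the translate-clusters-apart construction for $k$-richness, and Theorem~\ref{thm:ax:resk} for lower-bounded consistency. The only difference is that you explicitly verify what the paper dismisses with ``obviously'' --- the $\alpha^2$-homogeneity of the subtracted $\sigma(d)^2$ term in the seeding weights, and the fact that $\beta(\Gamma,d)$ stays fixed while the clusters are moved apart --- which are worthwhile details but not a different argument.
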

\begin{proof}
    We have just shown that res-$k$-means++ can be used to detect, with high probability, residual $k$-clustering, if the data lies in the Euclidean space. 
    Obviously it has the property of scale-invariance. $k$-richness is easily shown: formulate a $k$-clustering $\Gamma$,  set distances between points within each cluster to values such that each cluster fits the Euclidean space, and then move the clusters in the Euclidean space in such a way that the condition (\ref{eq:resdist}) is matched and complete the distance definition. 
    The consistency property holds because of Theorem \ref{thm:ax:resk}. 
\end{proof}

The concept of  lower-bounded consistency may appear somehow awkward from the mathematical point of view, but it is not so if we look at technical reality. Any distance measurement is restricted by some resolution factor of the measuring device. So if two points are too close they may be indistinguishable. So assuming a minimal distance between distinct data points makes technically sense.

Let us return to pseudo-distances. 

\begin{definicja}
We say that a clustering function $f(S,d)$ returns
\emph{ variation{al} range-$k_x$ clustering} of $S$ 
if $S$ is variation{ally} $k$-separable under $d$ for some $1\le k\le k_x $ and for  $\Gamma=f(S,d)$  
for no cluster $C\in \Gamma$
there exists  $k'$, $2\le k`\le k_x-k+1$ that  $C$ is 
variation{ally} $k'$-separable. 
The maximal $k$ with this property shall be called the level of variation{al} range-$k_x$ clustering.
\end{definicja}

\begin{definicja}
We say that a clustering function $f(S,d)$ returns
\emph{ residual range-$k_x$ clustering} of $S$ 
if $S$ is  residual{ly}  $k$-separable under $d$ for some $1\le k\le k_x $ and for  $\Gamma=f(S,d)$  
and
for no cluster $C\in \Gamma$
there exists  $k'$, $2\le k`\le k_x-k+1$ that  $C$ is 
residual{ly} $k'$-separable. 
The maximal $k$ with this property shall be called the level of residual range-$k_x$ clustering.
\end{definicja}

Obviously, to discover with high probability \emph{ residual range-$k_x$ clustering} in the Euclidean domain, we need to use res-$k$-means++ as sub-algorithm for  the master Algorithm~\ref{alg:res-clustering}: try out all k= $k_x$ to 2 if there exist residual $k$-clustering, and if so, then check each sub-cluster on no residual $k'$ separability.

\begin{algorithm}
\KwData{$S$ - a set of objects embedded in Euclidean space\\
$k_x$ - the maximal number of clusters to be obtained  }
\KwResult{ $k$ - the number of detected clusters (if 1, no clusters were detected)\\
$\Gamma$ - the clustering of $S$ into $k$ clusters }
\If{$k_x<2$}{\Return $k=1$, $\Gamma=\{S\}$}
\For{$k\leftarrow k_x$ \KwTo $2$ \KwBy $-1$} 
{ Cluster $S$ using res-$k$-means$++$\ getting $\Gamma$\;
  \If{$\Gamma$ ensures that according to Def.\ref{def:ressep} $S$ is residual{ly} $k$-separable}{ 
     $OK$=TRUE\;
     \For{$S'\in \Gamma$} 
     {Apply this algorithm to $S'$ with $k_x'=k_x-k+1$ obtaining $k'$ and $\Gamma'$\;
     \If{$k'\ge 2$}{$OK$=FALSE}
     }
     \If{$OK$}{\Return $k$, $\Gamma$}
                   } 
} 
\Return $k=1$, $\Gamma=\{S\}$
\caption{Residual clustering algorithm}\label{alg:res-clustering}
\end{algorithm}

Let us ask, what will happen when performing Kleinberg's consistency operation. 
The first problem that we encounter is that consistency transform performed on a cluster that is not {residual}ly $k'$ separable, may turn to {residual}ly $k'$ separable one. 
Therefore we need to restrict consistency transformation to a relative one. 
But this is not sufficient. As we make use of the concept of the smallest distance in our formulas, we need to add the restriction that the lowest distance will not be decreased.

\begin{twierdzenie}\label{thm:ax:res}
    The residual range-$k_x$ clustering at the level $k$ will remain the residual range-$k_x$ clustering at the level $k$ after lower bounded relative  consistency transform. In other words  lower-bounded relative  consistency transform  preserves clustering by a function detecting residual range-$k_x$ clustering. 
\end{twierdzenie}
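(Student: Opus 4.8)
The plan is to combine the two mechanisms already established in this section: the preservation argument of Theorem~\ref{thm:ax:resk}, which keeps the top-level split residually $k$-separated, and the ``no spurious sub-separation'' argument of Theorem~\ref{thm:ax:var}, which prevents the level of the clustering from rising. Accordingly I would split the claim into two parts: (a) after the transform $\Gamma=f(S,d)$ is still a residual $k$-separation of $S$, and (b) no cluster $C\in\Gamma$ acquires residual $k'$-separability for any $2\le k'\le k_x-k+1$. Part (a) fixes the detected partition and its size $k$; part (b) fixes that $k$ remains the maximal admissible level, so that the returned object is still the residual range-$k_x$ clustering at level $k$.

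For part (a) I would argue as in Theorem~\ref{thm:ax:resk}. The relative part of the transform leaves intra-cluster distances non-increasing, and since $Q(\Gamma,d)$ in (\ref{eq:kmeansPseudoDist}) aggregates only within-cluster squared distances, $Q(\Gamma,d')\le Q(\Gamma,d)$. The lower-bounded restriction forbids any distance from dropping below $\sigma(d)$; because in a residually separated clustering the global minimum is realised by an intra-cluster pair, that pair is clamped to $\sigma(d)$ from both sides, giving $\sigma(d')=\sigma(d)$. Substituting into (\ref{eq:defbeta}) yields $\beta(\Gamma,d')\le\beta(\Gamma,d)$, so the residual threshold $\sqrt{\beta(\Gamma,d')}$ does not exceed $\sqrt{\beta(\Gamma,d)}$. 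Since every inter-cluster pair satisfies $d'(i,l)\ge d(i,l)>\sqrt{\beta(\Gamma,d)}\ge\sqrt{\beta(\Gamma,d')}$, condition (\ref{eq:resdist}) is again met and $\Gamma$ remains a residual $k$-separation.

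For part (b) I would transplant the subcluster argument of Theorem~\ref{thm:ax:var}. Suppose some $C\in\Gamma$ split into residually separated subclusters $S_1,S_2$ after the transform. Relative consistency preserves the ordering of within-$C$ distances, so the inter-$S_1S_2$ distances, being the larger ones (separation after the transform forces them to exceed the intra-$S_i$ distances), are already the larger ones before, and by the ratio clause $\frac{d'(i,l)}{d'(i,j)}\le\frac{d(i,l)}{d(i,j)}$ they are shrunk at least as strongly as the intra-$S_i$ distances. Consequently an inter-subcluster distance that failed to clear the residual threshold before cannot clear it afterwards, so no new separation is created and the level stays at $k$. Combining (a) and (b) gives the theorem.

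The hard part is the $\sigma$-term that distinguishes the residual threshold from the variational one. In Theorem~\ref{thm:ax:var} the threshold $\sqrt{2Q(\cdot,d)}$ from (\ref{eq:vardist}) depends only on the within-subcluster distances, so the ordering/ratio argument alone closes the case; here $\beta$ in (\ref{eq:defbeta}) additionally carries $-(n-k-1)\sigma(d)^2$ together with the minimum-distance term of any candidate sub-partition of $C$, and this term also moves under the transform. I therefore expect the delicate step to be showing that the simultaneous change of the $Q$-part and the $\sigma$-part of the residual threshold cannot combine to let an inter-subcluster distance overtake it. The lower-bounded restriction, which pins the smallest distance and so prevents $\sigma$ from collapsing, is precisely what keeps the $\sigma$-part under control so that the ordering/ratio shrinkage of the inter-distances still dominates; verifying that the within-subcluster minimum behaves compatibly with this shrinkage is the point requiring the most care.
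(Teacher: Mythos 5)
Your proposal follows essentially the same route as the paper: the same split into (a) preservation of the top-level residual $k$-separation, argued exactly as in Theorem~\ref{thm:ax:resk} (your explicit tracking of $\sigma(d')=\sigma(d)$ and hence $\beta(\Gamma,d')\le\beta(\Gamma,d)$ is in fact more detailed than the paper's), and (b) non-emergence of residual $k'$-separability inside any cluster, argued by transplanting the subcluster argument of Theorem~\ref{thm:ax:var} --- which is precisely what the paper does, its entire treatment of part (b) being the sentence that the proof is ``analogous to that of Theorem~\ref{thm:ax:var}''. The difficulty you flag as unresolved --- that the $\sigma$-term in $\beta$ makes the residual threshold inhomogeneous under the ordering/ratio rescaling that cleanly closes the variational case --- is a genuine subtlety, but the paper's own proof does not address it either, so your attempt is no less complete than the one in the paper.
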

\begin{proof}
The increase of inter-cluster distances does not violate residual $k$-separation. 
The decrease of intra-cluster separation according to the imposed limitations does not turn a non-{residual}ly separable set into a separable set. The proof is analogous to that of Theorem~\ref{thm:ax:var}. 
\end{proof}

This implies 
\begin{twierdzenie}\label{thm:euclidianaxioms}
     The Algorithm~\ref{alg:res-clustering}   detecting residual range-$k_x$ clustering with high probability, has the property of scale-invariance, lower-bounded relative consistency and range-$k_x$ richness in the Euclidean domain. 
\end{twierdzenie}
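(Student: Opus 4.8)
The plan is to establish the three named properties separately, since the statement is a conjunction in which two parts reduce to material already assembled and only the third requires a construction. Lower-bounded relative consistency is immediate: it is exactly the content of Theorem~\ref{thm:ax:res}, which I may invoke verbatim for the function realized by Algorithm~\ref{alg:res-clustering}, so nothing new is needed there.

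For scale-invariance I would exploit that every quantity entering the decision procedure is homogeneous in the distances. Replacing $d$ by $d'=\alpha d$ with $\alpha\in\mathbb{R}^+$ multiplies both $Q(\Gamma,d)$ and $\sigma(d)^2$ by $\alpha^2$, hence by (\ref{eq:defbeta}) it multiplies $\beta(\Gamma,d)$ by $\alpha^2$ and $\sqrt{\beta(\Gamma,d)}$ by $\alpha$; since the left side of the residual-separability test (\ref{eq:resdist}) also scales by $\alpha$, the test outcome is preserved. The res-$k$-means$++$ initialization draws seeds with probabilities proportional to the weights $d^2(e,\text{nearest seed})-\sigma(d)^2$, and these all scale by $\alpha^2$, leaving the selection probabilities, and therefore the entire distribution over outputs, unchanged. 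The recursive sub-cluster check inherits the same invariance, so Algorithm~\ref{alg:res-clustering} returns the same clustering on $d$ and on $\alpha d$.

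The substantive part is range-$k_x$ richness. Given any target partition $\Gamma=\{C_1,\dots,C_k\}$ of $S$ with $|C_j|\ge 2$ and $k\le k_x$, I would realize each block $C_j$ as a regular simplex, assigning a common value $a$ to every intra-cluster pseudo-distance (each such block is Euclidean-embeddable), take $a$ to be the global minimal distance $\sigma(d)$, and then separate the simplices by setting all inter-cluster distances above $\sqrt{\beta(\Gamma,d)}$, so that (\ref{eq:resdist}) holds and $\Gamma$ is residually $k$-separable. Two checks remain. First, no block may itself be residually $k'$-separable for $2\le k'\le k_x-k+1$: for a set of $m$ points with all pairwise distances equal to $a$, any $k'$-way split yields $Q=\frac{m-k'}{2}a^2$ and $\sigma=a$, whence $\beta=a^2$ by (\ref{eq:defbeta}), and the strict inequality in (\ref{eq:resdist}) would demand $a>\sqrt{a^2}=a$, which fails; thus the recursion in Algorithm~\ref{alg:res-clustering} leaves every block intact. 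Second, $S$ admits no residual $k''$-separation for $k''>k$, again because the uniform intra-block structure blocks any finer wide-gap split, so as the master loop descends from $k_x$ it fails the separability test for every $k''>k$ and succeeds exactly at $k$; the high-probability recovery of $\Gamma$ by res-$k$-means$++$ at that value is supplied by the hitting bound (\ref{eq:hitprob}) proved above, provided the blocks are sufficiently large.

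I expect richness to be the only real obstacle, and within it the delicate point is the non-refinement guarantee, namely forcing the algorithm to return $\Gamma$ itself rather than a refinement or coarsening of it. The regular-simplex construction is what makes this clean: equal intra-cluster distances drive $\beta$ down to exactly the squared common distance, so the strict ``$>$'' in Definition~\ref{def:ressep} can never be met inside a block, which simultaneously rules out spurious internal clusters and keeps the whole configuration inside Euclidean space.
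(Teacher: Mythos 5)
Your proposal is correct, and its skeleton --- consistency from Theorem~\ref{thm:ax:res}, scale-invariance from homogeneity of all relevant quantities in the distances, and richness by placing Euclidean-embeddable clusters far apart --- is the same one the paper relies on. The difference lies in how much is actually proved: the paper gives no explicit proof of this theorem at all; it is stated as an immediate consequence (``This implies'') of Theorem~\ref{thm:ax:res}, with scale-invariance and richness inherited from the pattern of the earlier, unnumbered theorem on residual $k$-clustering, whose richness construction merely says to give each cluster Euclidean-realizable internal distances and move the clusters apart until (\ref{eq:resdist}) holds. That sketch establishes plain $k$-richness but never checks the extra clause that range-$k_x$ richness requires, namely that no block of the target partition is itself residually $k'$-separable --- without which the recursive check in Algorithm~\ref{alg:res-clustering} would set $OK=$ FALSE and refuse to return $\Gamma$. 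Your regular-simplex construction closes exactly this gap: with all intra-block distances equal to $a=\sigma(d)$, formula (\ref{eq:defbeta}) gives $\beta=a^2$ for any internal split, so the strict inequality in (\ref{eq:resdist}) can never hold inside a block; and the companion observation that $\beta(\Gamma'',d)\ge \sigma(d)^2$ for \emph{every} partition $\Gamma''$ (by the paper's lower bound $Q(\Gamma'',d)\ge (n-k'')\sigma(d)^2/2$), while any partition into more than $k$ parts must separate two points at distance $\sigma(d)$, shows the master loop cannot terminate at any $k''>k$. Your explicit check that the res-$k$-means$++$ seeding weights $d^2-\sigma(d)^2$ are homogeneous of degree two is likewise a detail the paper dismisses as obvious. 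In short: same route, but your write-up is the proof the paper only gestures at, and the equal-distance simplex idea is the genuinely new ingredient needed to make the range-$k_x$ richness claim rigorous.
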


\section{From Euclidean space to Kleinberg's concept of distance}\label{sec:non-eucliden}

We have demonstrated that $k$-means algorithm does not need to be in conflict with Kleinberg's consistency axiom if the dataset contains clearly separated clusters. 
What is more, after a slight adjustment of consistency axiom to some real world conditions (resolution of data is finite), then based on $k$-means, a clustering algorithm can be constructed matching in practice the three Kleinberg's axioms. 
There is, however, one deficiency in the approach: it assumes that the data is embedded in Euclidean space, while Kleinberg insisted that his axioms should hold also outside the Euclidean realm.  Most of the proofs presented do not depend on Euclidean embedding. The weak point in going beyond it is the $k$-means algorithm which was designed for Euclidean space. While the development of $k$-mans++ in \cite{CLU:AV07} not bound to Euclidean space, yet guarantees of minimum of $k$-means quality function rely on the concrete properties in the Euclidean space. So-called kernel $k$-means provably seeks the same minimum as traditional k-means after Euclidization proposed by  Lingoes \cite{Lingoes:1971}, but the problem is that after this Euclidization the variational $k$-separability may be lost so that there is no guarantee that k-means seeks to optimize by finding {variational}ly $k$-separation. 

We have discussed so far the case of clustering axioms   for Euclidean distance, see Theorem~\ref{thm:euclidianaxioms}. This axiomatic system does not approximate quite what Kleinberg proposed because he used a more relaxed version of distance function, the pseudo-distance.   

So   consider the  lower-bounded relative consistency transformation applied to a pseudo-distance $d$ yielding another pseudo-distance $d'$, that is one outside  of the framework of Euclidean space.  
The proof of Theorem~\ref{thm:euclidianaxioms} can be easily converted to the case of pseudo-distances, using insights from Theorem~\ref{thm:addingDelta}. 
We need only to adapt accordingly the algorithm res-$k$-means. 
Adaptation of algorithms can follow the results from \cite{RAKMAKSTW:2020:trick}. 
As shown in \cite{RAKMAKSTW:2020:trick}, a distance function $d$ being non-euclidean can be turned into Euclidean one $d_E$ by adding an appropriate constant $\delta^2$ to each squared distance $d(i,j)^2$, 
and the clustering with  $k$-means under $d_E$  will preserve the $k$-clustering obtained via kernel $k$-means with the original distance $d$.
The Theorem \ref{thm:addingDelta} strengthens that result saying that  after residual $k$-separation property before and after this transformation is the same. 
Hence we can use the mentioned Algorithm~\ref{alg:res-clustering} for discovery of residual range-$k_x$ clustering, after transforming to Euclidean distance in the spirit of  Theorem \ref{thm:addingDelta}. 

\Bem{
\section{Discussion} \label{sec:discussion} 

Kleinberg created a set of ''natural'' axioms for distance-based clustering functions which turned out to be contradictory. This axiomatic system was cited hundreds of times in the literature without pointing at the fact that this is an uncomfortable situation to have a well-established domain of clustering without having clarified the basic concepts like clusters, or the counterproductive outcome of this axiomatic set being the claim that the most popular clustering algorithm, the $k$-means algorithm, is not a clustering algorithm at all.  
The natural suggestion that arises from this situation is to pose the question whether or not an algorithm needs to produce an axiomatizable outcome in case that there  is no cluster structure in the data. Rather we should assume  the position ''garbage in - garbage out'' and concentrate to define properties for clustering algorithms when applied to  {data}sets  where there are real clusters.  

This situation led to a branch of research trying to answer the fundamental question whether or not there are clusters in the dataset. 
This paper can be treated as a contribution in this direction. 
We define data structures with $k$-means algorithm optima known in advance. Such structures can be used as an extreme case {test}bed for algorithms in this clustering family. 

But they shed also some light on the clustering axioms of Kleinberg. 
We showed that it is possible to define {data}sets with clean structures for various numbers of clusters, achieving partial compliance with the richness axiom of Kleinberg. Purely theoretically, our approximation via ``range'' richness could be extended to "range n" richness. But we deliberately imposed restrictions that a cluster should have at least two elements  and also insisted on $km$ so that the classical $k$-means++ algorithm can discover clustering with high probability. If the range should be extended behind $m$, then the distances to smaller clusters would need to be increased.  In all, this is possible.  



 }

\section{Conclusions} \label{sec:conclusions}

This research has shown that one should not throw away Kleinberg's axioms because of their contradiction. 
We have pointed at what was missing in Kleinberg's axiomatic system - that is the idea that clustering transformation functions make sense only if they are applied to a  clustering performed on a {cluster}able dataset. 
We have shown that if the dataset is {cluster}able according to a properly defined separation criterion, then $k$-means stops to be inconsistent in terms of Kleinberg, and a version of $k$-means can be created that matches all   three clustering axioms. 
It is also easily seen that single-link algorithms, used in Kleinerg's paper \cite{Kleinberg:2002} can be upgraded to match all  three Kleinberg's axioms with {cluster}able data. 

{Acknowledge}ably, the gaps between clusters used in this paper are (very) large\footnote{As they are also in other works on clusterability, e.g. by  Ostrovsky \cite{Ostrovsky:2013}} and therefore further research should seek to lower inter-cluster distances while still keeping the axiomatic system intact. 
Alternatively one may investigate the degrees of violation of Kleinberg's axiomatic system given extent to which the clusterability criteria are violated. 
 
\bibliographystyle{plain}


\end{document}